\numberwithin{equation}{section}
\theoremstyle{plain}
\newtheorem{thm}{Theorem}[section]
\newtheorem{lemma}[thm]{Lemma}
\newtheorem{cor}[thm]{Corollary}
\newtheorem{defn}[thm]{Definition}
\newcommand{\R}{{\mathbb{R}}}
\newcommand{\X}{{\mathcal{X}}}
\newcommand{\C}{{\mathbb{C}}}
\newcommand{\G}{\mathcal{G}}
\renewcommand{\P}{\mathbb{P}}
\newcommand{\E}{\mathbb{E}}
\newcommand{\nn}{{\rm NN}}
\newcommand{\braces}[1]{\left\{#1\right\}}
\newcommand{\norm}[1]{\left\|#1\right\|}
\newcommand{\abs}[1]{\left|#1\right|}
\newcommand{\paren}[1]{\left(#1\right)}
\newcommand{\Xcal}{\mathcal{X}}
\begin{document}

\title{Consistent procedures for cluster tree estimation and pruning}

\author{
Kamalika Chaudhuri \\ \texttt{kamalika@cs.ucsd.edu} 
\and 
Sanjoy Dasgupta \\ \texttt{dasgupta@cs.ucsd.edu} 
\and
Samory Kpotufe \\ \texttt{samory@ttic.edu}
\and 
Ulrike von Luxburg \\ \texttt{luxburg@informatik.uni-hamburg.de}
}

\maketitle

\begin{abstract}
For a density $f$ on $\R^d$, a {\it high-density cluster} is any
connected component of $\{x: f(x) \geq \lambda\}$, for some $\lambda > 0$.
The set of all high-density clusters forms a hierarchy called the
{\it cluster tree} of $f$. We present two procedures for estimating
the cluster tree given samples from $f$. The first is a robust
variant of the single linkage algorithm for hierarchical clustering.
The second is based on the $k$-nearest neighbor graph of the samples.
We give finite-sample convergence rates for these algorithms which also
imply consistency, and we derive lower bounds on the sample complexity
of cluster tree estimation. Finally, we study a tree pruning procedure
that guarantees, under milder conditions than usual, to remove clusters
that are spurious while recovering those that are salient.
\end{abstract}

\section{Introduction}

We consider the problem of hierarchical clustering in a ``density-based'' setting,
where a cluster is formalized as a region of high density. Given data drawn i.i.d.
from some unknown distribution with density $f$ in $\R^d$, the goal is to estimate
the ``hierarchical cluster structure'' of the density, where a cluster is defined
as a connected subset of an $f$-level set $\{x \in \Xcal\; : \; f(x) \geq \lambda\}$.
These subsets form an infinite tree structure as $\lambda \geq 0$ varies, in the
sense that each cluster at some level $\lambda$ is contained in a cluster at a
lower level $\lambda' < \lambda$. This infinite tree is called the \emph{cluster tree}
of $f$ and is illustrated in Figure~\ref{fig-ulesclustertree}.

\begin{figure}
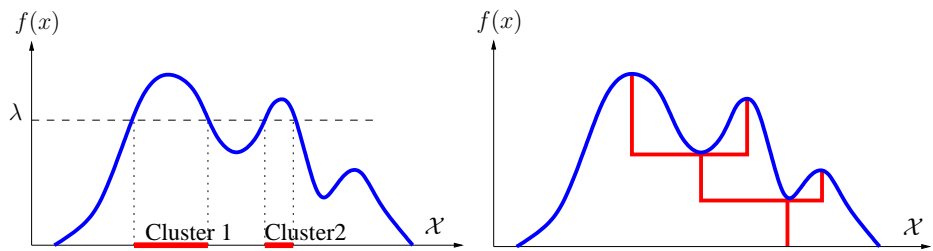

\centering
\resizebox{2.4in}{!}{\input{treeclusters.pstex_t}}
\resizebox{2.4in}{!}{\input{treeclusters2.pstex_t}}
\caption{Left: A probability density $f$ on $\R$, and two clusters at a fixed
level $\lambda$. Right: The same density, with the branching structure of the corresponding
cluster tree.}
\label{fig-ulesclustertree}
\end{figure}

Our formalism of the cluster tree (Section~\ref{ssec-clustertree}) and our notion
of consistency follow early work on clustering, in particular that of \citet{H81}.
Much subsequent work has been devoted to estimating the connected
components of a single level set; see, for example, \citet{P95b,T97}
and, more recently, \citet{MHL09, RV09}, \citet{RW10}, and \citet{SSN09}.
In contrast to these results,
the present work is concerned with the simultaneous estimation of all level sets
of an unknown density: recovering the cluster tree as a whole.

Are there hierarchical clustering algorithms which converge to the cluster
tree? Previous theory work \citep{H81, P95} has provided partial consistency
results for the well-known single-linkage clustering algorithm, while other
work~\citep{W69} has suggested ways to overcome the deficiencies of this
algorithm by making it more robust, but without proofs of convergence.
In this paper, we propose a novel way to make single-linkage more robust,
while retaining most of its elegance and simplicity (see Figure~\ref{fig:alg}).
We establish its finite-sample rate of convergence (Theorem~\ref{thm:upper-bound});
the centerpiece of our argument is a result on continuum percolation
(Theorem~\ref{thm:connectedness}). This also implies consistency in
the sense of Hartigan.

We then give an alternative procedure based on the $k$-nearest neighbor
graph of the sample (see Figure~\ref{fig:alg2}). Such graphs are widely
used in machine learning, and interestingly there is still much to understand
about their expressiveness. We show that by successively removing points from
this graph, we can create a hierarchical clustering that also converges to the
cluster tree, at roughly the same rate as the linkage-based scheme
(Theorem~\ref{thm:upper-bound2}).

Next, we use tools from information theory to give a lower bound on the problem
of cluster tree estimation (Theorem~\ref{thm:lower-bound}), which matches our
upper bounds in its dependence on most of the parameters of interest.

The convergence results for our two hierarchical clustering procedures nevertheless
leave open the possibility that the trees they produce contain spurious branching.
This is a well-studied problem in the cluster tree literature, and we address it
with a pruning method (Figure~\ref{fig:alg_pruning}) that preserves the consistency
properties of the tree estimators while providing finite-sample guarantees on the
removal of false clusters (Theorem~\ref{thm:pruning}). This procedure
is based on simple intuition that can carry over to other cluster tree estimators.

\section{Definitions and previous work}\label{sec-def}

Let $\X$ be a subset of $\R^d$. We exclusively consider Euclidean distance on
$\X$, denoted $\| \cdot \|$. Let $B(x,r)$ be the closed ball of radius $r$
around $x$.

\subsection{Clustering}
We start by considering the more general context of clustering. While clustering
procedures abound in statistics and machine learning, it remains largely unclear
whether clusters in finite data---for instance, the clusters returned by a
particular procedure---reveal anything meaningful about the underlying distribution
from which the data is sampled. Understanding what statistical estimation based on
a finite data set reveals about the underlying distribution is a central preoccupation
of statistics and machine learning; however this kind of analysis has proved elusive
in the case of clustering, except perhaps in the case of density-based clustering.

Consider for instance $k$-means, possibly the most popular clustering
procedure in use today. If this procedure returns $k$ clusters on an $n$-sample
from a distribution $f$, what do these clusters reveal about $f$?
\cite{P81} proved a basic consistency result:
if the algorithm always finds the global minimum of the $k$-means cost function
(which, incidentally, is NP-hard and thus computationally intractable in general;
see \cite{DF09}, Theorem~3), then as $n \rightarrow \infty$,
the clustering is the globally optimal $k$-means solution for $f$, suitably defined.
Even then, it is unclear whether the best $k$-means solution to $f$ is
an interesting or desirable quantity in settings outside of vector quantization.

Our work, and more generally work on density-based clustering, relies on meaningful
formalisms of how a clustering of data generalizes to unambiguous structures of
the underlying distribution. The main such formalism is that of the cluster tree.

\subsection{The cluster tree}\label{ssec-clustertree}

We start with notions of connectivity. A {\em path} $P$ in $S \subset \X$
is a continuous function $P: [0,1] \rightarrow S$. If $x = P(0)$
and $y = P(1)$, we write $x \stackrel{P}{\leadsto} y$ and we say that
$x$ and $y$ are connected in $S$. This relation -- ``connected in $S$''
-- is an equivalence relation that partitions $S$ into its
{\it connected components}. We say $S \subset \X$ is {\em connected}
if it has a single connected component.

The cluster tree is a hierarchy each of whose levels is a {\it partition of a
subset} of $\X$, which we will occasionally call a {\it subpartition}
of $\X$. Write $\Pi(\X) = \{\mbox{subpartitions of $\X$}\}$.

\begin{defn}
For any $f: \X \rightarrow \R$, the {\em cluster tree of $f$} is a function
$\C_f: \R \rightarrow \Pi(\X)$ given by
$ \C_f(\lambda)
= \mbox{connected components of $\{x \in \X: f(x) \geq \lambda\}$} .$
Any element of $\C_f(\lambda)$, for any $\lambda$, is called a {\em cluster} of $f$.
\end{defn}
For any $\lambda$, $\C_f(\lambda)$ is a set of disjoint clusters of $\X$.
They form a hierarchy in the following sense.
\begin{lemma}
Pick any $\lambda' \leq \lambda$. Then:
\begin{enumerate}
\item For any $C \in \C_f(\lambda)$, there exists $C' \in \C_f(\lambda')$
such that $C \subseteq C'$.
\item For any $C \in \C_f(\lambda)$ and $C' \in \C_f(\lambda')$,
either $C \subseteq C'$ or $C \cap C' = \emptyset$.
\end{enumerate}
\label{lemma:hierarchy}
\end{lemma}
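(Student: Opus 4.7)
The plan is to exploit the simple set-inclusion $\{x : f(x) \geq \lambda\} \subseteq \{x : f(x) \geq \lambda'\}$ implied by $\lambda' \leq \lambda$, together with the basic fact that a connected subset of a topological space is contained in exactly one connected component of any superset it sits inside.

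For part 1, I would take any $C \in \C_f(\lambda)$ and observe that $C$ is a connected subset of $\{f \geq \lambda\}$, hence a connected subset of the larger set $\{f \geq \lambda'\}$. Since the connected components of $\{f \geq \lambda'\}$ partition it, and $C$ is connected and nonempty, $C$ must lie entirely within one of them; take $C'$ to be that component.

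For part 2, I would argue by picking a witness $x \in C \cap C'$ assuming the intersection is nonempty, and showing every $y \in C$ also lies in $C'$. Concretely, since $x, y \in C$ and $C$ is a connected component of $\{f \geq \lambda\}$, there is a path from $x$ to $y$ inside $\{f \geq \lambda\} \subseteq \{f \geq \lambda'\}$. So $x$ and $y$ are connected in $\{f \geq \lambda'\}$, forcing them into the same component; since $x \in C'$, also $y \in C'$. Hence $C \subseteq C'$, and the dichotomy in part 2 follows.

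Nothing here is really an obstacle — the statement is essentially a consequence of how connected components behave under inclusion of ambient sets, and the monotonicity of superlevel sets in $\lambda$. The only thing to be slightly careful about is that paths are defined (as stated earlier) as continuous maps $P: [0,1] \to S$; a path in $\{f \geq \lambda\}$ is automatically a path in $\{f \geq \lambda'\}$ because the codomain merely enlarges, and continuity is unaffected. Once this is noted, both parts are immediate.
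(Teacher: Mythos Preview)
Your proposal is correct. The paper states Lemma~\ref{lemma:hierarchy} without proof, treating it as an elementary observation; your argument is the standard one and exactly what is intended. One minor stylistic point: your Part~2 argument already implicitly reproves Part~1, so you could streamline by deducing Part~2 directly from Part~1 (the unique $C''\in\C_f(\lambda')$ containing $C$ must equal $C'$ whenever $C\cap C'\neq\emptyset$, since distinct components are disjoint), but either route is fine.
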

We will sometimes deal with the restriction of the cluster tree to a
finite set of points $x_1, \ldots, x_n$. Formally, the restriction
of a subpartition $\C \in \Pi(\X)$ to these points is defined to be
$ \C[x_1, \ldots, x_n] = \{C \cap \{x_1, \ldots, x_n\}: C \in \C\}$.
Likewise, the restriction of the cluster tree is
$\C_f[x_1, \ldots, x_n]: \R \rightarrow \Pi(\{x_1, \ldots, x_n\})$, where
$\C_f[x_1, \ldots, x_n](\lambda) = \C_f(\lambda)[x_1, \ldots, x_n] $
(Figure~\ref{fig:tree-example}).

\begin{figure}
\begin{center}
\resizebox{3.3in}{!}{\input{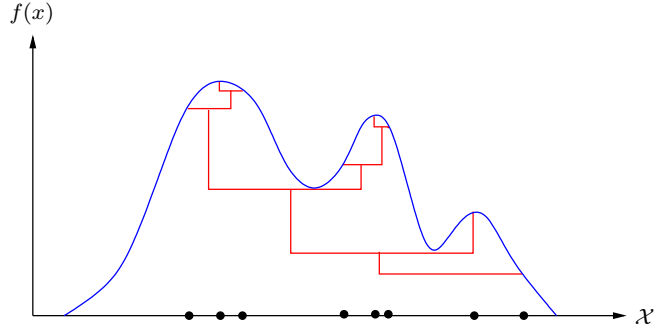}}
\end{center}
\caption{A probability density $f$, and the restriction of $\C_f$ to a finite set of
eight points.}
\label{fig:tree-example}
\end{figure}

\subsection{Notion of convergence and previous work}\label{ssec-related}

Suppose a sample $X_n \subset \X$ of size $n$ is used to construct a tree $\C_n$
that is an estimate of $\C_f$. \citet{H81} provided a sensible
notion of consistency for this setting.
\begin{defn}
For any sets $A,A' \subset \X$, let $A_n$ (resp, $A'_n$) denote the smallest
cluster of $\C_n$ containing $A \cap X_n$ (resp, $A' \cap X_n$). We say $\C_n$
is \emph{consistent} if, whenever $A$ and $A'$ are different connected components of
$\{x: f(x) \geq \lambda\}$ (for some $\lambda > 0$),
$\P(\mbox{$A_n$ is disjoint from $A_n'$}) \rightarrow 1$ as
$n \rightarrow \infty$.
\label{defn:consistency}
\end{defn}
It is well known that if $X_n$ is used to build a uniformly
consistent density estimate $f_n$ (that is,
$\sup_x |f_n(x) - f(x)| \rightarrow 0$), then the cluster tree $\C_{f_n}$
is consistent; see the appendix for details. The problem is that $\C_{f_n}$
is not easy to compute for typical density estimates $f_n$: imagine, for
instance, how one might go about trying to find level sets of a mixture
of Gaussians! \citet{WL83} have an efficient procedure that
tries to approximate $\C_{f_n}$ when $f_n$ is a $k$-nearest neighbor
density estimate, but they have not shown that it preserves the consistency
of $\C_{f_n}$.

On the other hand, there is a simple and elegant algorithm that is a plausible
estimator of the cluster tree: {\it single linkage} (or {\it Kruskal's algorithm}).
Given a data set $x_1, \ldots, x_n \in \R^d$, it operates as follows.
\begin{enumerate}
\item For each $i$, set $r_2(x_i)$ to the distance from
$x_i$ to its nearest neighbor.
\item As $r$ grows from $0$ to $\infty$:
\begin{enumerate}
\item Construct a graph $G_r$ with nodes $\{x_i: r_2(x_i) \leq r\}$.

Include edge $(x_i, x_j)$ if $\|x_i - x_j\| \leq r$.
\item Let $\C_n(r)$ be the connected components of $G_r$.
\end{enumerate}
\end{enumerate}
\citet{H81} has shown that single linkage is consistent in one dimension
(that is, for $d = 1$). But he also demonstrates, by a lovely reduction to continuum
percolation, that this consistency fails in higher dimension $d \geq 2$.
The problem is the requirement that $A \cap X_n \subset A_n$: by the time the
clusters are large enough that one of them contains all of $A$, there is a
reasonable chance that this cluster will be so big as to also contain part of $A'$.

With this insight, Hartigan defines a weaker notion of {\it fractional consistency},
under which $A_n$ (resp, $A_n'$) need not contain {\it all} of $A \cap X_n$ (resp,
$A' \cap X_n$), but merely a sizeable chunk of it -- and ought to be very close (at
distance $\rightarrow 0$ as $n \rightarrow \infty$) to the remainder. He then shows
that single linkage achieves this weaker consistency for any pair $A, A'$ for
which the ratio
$$\frac{\inf\{f(x): x \in A \cup A'\}}{\sup\{\inf\{f(x): x \in P\} : \mbox{paths $P$ from $A$ to $A'$}\}}$$
is sufficiently large. More recent work by \citet{P95} closes the gap and
shows fractional consistency whenever this ratio is $>1$.

A more robust version of single linkage has been proposed by \citet{W69}:
when connecting points at distance $r$ from each other, only consider points
that have at least $k$ neighbors within distance $r$ (for some $k > 2$).
Thus initially, when $r$ is small, only the regions of highest density are
available for linkage, while the rest of the data set is ignored. As $r$
gets larger, more and more of the data points become candidates for linkage.
This scheme is intuitively sensible, but Wishart does not provide a proof
of convergence. Thus it is unclear how to set $k$, for instance.

Several papers~\citep{RV09, MHL09, SSN09, RW10} have recently
considered the problem of recovering the connected components of
$\{x: f(x) \geq \lambda\}$ for a user-specified $\lambda$: the {\it flat}
version of our problem. Most similar to the work in this paper is the algorithm
of \citet{MHL09}, which uses the $k$-nearest neighbor graph of the data.
These level set results invariably require \emph{niceness} conditions
on the specific level set being recovered, often stated in terms of the
smoothness of the boundary of clusters \cite{}, and/or regularity conditions on the
density $f$ on clusters of the given level set. It is unclear whether these
conditions hold for all level sets of a general density, in other words how restrictive
these conditions are in the context of recovering the entire cluster tree.
In contrast, under mild requirements on the distribution,
our conditions on the recovered level sets hold for \emph{any} level set as the sample
size $n$ increases. The main distributional requirement for consistency is that of
continuity of the density $f$ on a compact support $\X$.

A different approach is taken in a paper of \citet{S11},
which does not require the user to specify a density level, but rather
automatically determines the smallest $\lambda$ at which $\C_f(\lambda)$
has two components. In \citet{S11} the continuity requirements on the density are milder
than for other results in the literature, including ours. However it
does restrict attention to bimodal densities due to technical hurdles of the flat case:
different levels of the cluster tree are collapsed together in the flat case making it difficult
to recover a given level from data especially in the case of multimodal densities.
Interestingly, the hierarchical
setting resolves some of the technical hurdles of the flat case since levels of the cluster tree
would generally appear at different levels of a sensible hierarchical estimator.
This makes it possible in this paper to give particularly simple estimators, and to analyze them under quite
modest assumptions on the data.

A related issue that has received quite a lot of attention is that of {\it pruning}
a cluster tree estimate: removing spurious clusters. A recent result of
\citet{RSNW12} gives meaningful statistical guarantees, but is based on the
cluster tree of an empirical density estimate, which is algorithmically
problematic as discussed  earlier. \citet{SN10} have an appealing top-down scheme
for estimating the cluster tree, along with a post-processing step (called
{\it runt pruning}) that helps identify modes of the distribution. The consistency
of this method has not yet been established. We provide a consistent
pruning procedure for both our procedures.

The present results are based in part on earlier conference versions,
namely \cite{CD10} and \cite{KV11}. The result of \cite{CD10} analyzes the
consistency of the first cluster tree estimator (see next section) but provides no
pruning method for the estimator. The result of \cite{KV11} analyzes the second
cluster tree estimator and shows how to prune it. However the pruning
method is tuned to this second estimator and works only under strict H\"{o}lder
continuity requirements on the density. The present work first provides a unified
analysis of both estimators using techniques developed in \cite{CD10}. Second,
building on insight from \cite{KV11}, we derive a new pruning method which
proveably works for either estimator without H\"{o}lder conditions on the distribution.
In particular, the pruned version of either cluster tree estimate
remains consistent under mild uniform continuity assumptions.
The main finite-sample pruning result of Theorem \ref{thm:pruning} requires even milder
conditions on the density than required for consistency.

\section{Algorithms and results}

The first algorithm we consider in this paper is a generalization of Wishart's
scheme and of single linkage, shown in Figure~\ref{fig:alg}. It has two free
parameters: $k$ and $\alpha$. For practical reasons, it is of interest to keep
these as small as possible.  We provide finite-sample convergence rates for all
$1 \leq \alpha \leq 2$ and we can achieve $k \sim d \log n$ if
$\alpha \geq \sqrt{2}$. Our rates for $\alpha = 1$ force $k$ to
be much larger, exponential in $d$. It is an open problem to determine whether
the setting ($\alpha = 1, k \sim d \log n$) yields consistency.

\begin{figure}
\framebox[6.5in]{
\begin{minipage}[t]{6in}
\noindent
{\bf Algorithm 1}

\begin{enumerate}
\item For each $x_i$ set
$r_k(x_i) = \min\{r: \mbox{$B(x_i,r)$ contains $k$ data points}\}$.
\item As $r$ grows from $0$ to $\infty$:
\begin{enumerate}
\item Construct a graph $G_r$ with nodes $\{x_i: r_k(x_i) \leq r\}$.

Include edge $(x_i, x_j)$ if $\|x_i - x_j\| \leq \alpha r$.
\item Let $\C_n(r)$ be the connected components of $G_r$.
\end{enumerate}
\end{enumerate}
\end{minipage}
}
\caption{An algorithm for hierarchical clustering. The input is a sample
$X_n = \{x_1, \ldots, x_n\}$ from density $f$ on $\X$. Parameters $k$
and $\alpha$ need to be set. Single linkage is $(\alpha=1,k=2)$.
Wishart suggested $\alpha = 1$ and larger $k$.}
\label{fig:alg}
\end{figure}

Conceptually, the algorithm creates a series of graphs $G_r = (V_r, E_r)$
satisfying a nesting property:
$
r \leq r' \ \ \ \Rightarrow \ \ \ V_r \subset V_{r'} \mbox{\ and \ } E_r \subset E_{r'}.
$
A point is admitted into $G_r$ only if it has $k$ neighbors within distance $r$;
when $r$ is small, this picks out the regions of highest density, roughly.
The edges of $G_r$ are between all pairs of points within distance $\alpha r$
of each other.

In practice, the only values of $r$ that matter are those corresponding to
interpoint distances within the sample, and thus the algorithm is efficient.
A further simplification is that the graphs $G_r$ don't need to be explicitly
created. Instead, the clusters can be generated directly using Kruskal's
algorithm, as is done for single linkage.

\begin{figure}
\framebox[6.5in]{
\begin{minipage}[t]{6in}
\noindent
{\bf Algorithm 2}

\begin{enumerate}
\item For each $x_i$ set
$r_k(x_i) = \min\{r: \mbox{$B(x_i,r)$ contains $k$ data points}\}$.
\item As $r$ grows from $0$ to $\infty$:
\begin{enumerate}
\item Construct a graph $G^{\nn}_r$ with nodes $\{x_i: r_k(x_i) \leq r\}$.

Include edge $(x_i, x_j)$ if:
$$
\begin{array}{ll}
\|x_i - x_j\| \leq \alpha \max(r_k(x_i), r_k(x_j)) & \mbox{$k$-NN graph} \\
\|x_i - x_j\| \leq \alpha \min(r_k(x_i), r_k(x_j)) & \mbox{mutual $k$-NN graph} \\
\end{array}
.
$$
\item Let $\C_n(r)$ be the connected components of $G^{\nn}_r$.
\end{enumerate}
\end{enumerate}
\end{minipage}
}
\caption{A cluster tree estimator based on the $k$-nearest neighbor graph.}
\label{fig:alg2}
\end{figure}

The second algorithm we study (Figure~\ref{fig:alg2}) is based on the $k$-nearest
neighbor graph of the samples. There are two natural ways to define this graph,
and we will analyze the sparser of the two, the mutual $k$-NN graph, which we
shall denote $G^{\nn}$. Our results hold equally for the other variant.

One way to think about the second hierarchical clustering algorithm is that
it creates the $k$-nearest neighbor graph on all the data samples, and then
generates a hierarchy by removing points from the graph in decreasing order
of their $k$-NN radius $r_k(x_i)$. The resulting graphs $G^{\nn}_r$ have the
same nodes as the corresponding $G_r$ but have potentially fewer edges:
$E^{\nn}_r \subset E_r$. This makes them more challenging to analyze.

Much of the literature on density-based clustering refers to clusters not
by the radius $r$ at which they appear, but by the ``corresponding empirical density'',
which in our case would be $\lambda = k/(n v_d r^d)$, where $v_d$ is the
volume of the unit ball in $\R^d$. The reader who is more comfortable with
the latter notation should mentally substitute $G[\lambda]$ whenever we refer
to $G_r$. We like using $r$ because it is directly observed rather than inferred.
Consider, for instance, a situation in which the underlying density $f$ is
supported on a low-dimensional submanifold of $\R^d$. The two cluster tree
algorithms continue to be perfectly sensible, as does $r$; but the inferred
$\lambda$ is misleading.

\subsection{A notion of cluster salience}

Suppose density $f$ is supported on some subset $\X$ of $\R^d$. We will find
that when Algorithms 1 and 2 are run on data drawn from $f$, their estimates
are consistent in the sense of Definition~\ref{defn:consistency}.
But an even more interesting question is, what clusters will be identified from a
{\it finite} sample? To answer this, we need a notion of salience.

The first consideration is that a cluster is hard to identify if it contains
a thin ``bridge'' that would make it look disconnected in a small sample.
To control this, we consider a ``buffer zone'' of width $\sigma$ around the
clusters.
\begin{defn}
For $Z \subset \R^d$ and $\sigma > 0$, write $Z_{\sigma} = Z + B(0,\sigma) =
\{y \in \R^d: \inf_{z\in Z} \|y-z\| \leq \sigma\}$.
\end{defn}
$Z_\sigma$ is a full-dimensional set, even if $Z$ itself is not.

Second, the ease of distinguishing two clusters $A$ and $A'$ depends inevitably
upon the separation between them. To keep things simple, we'll use the same
$\sigma$ as a separation parameter.
\begin{defn}
Let $f$ be a density supported on $\X \subset \R^d$. We say that $A, A' \subset \X$ are
{\bf$(\sigma,\epsilon)$-separated} if there exists $S \subset \X$
(\emph{separator set}) such that ({\rm i}) any path in $\X$ from $A$ to $A'$ intersects $S$,
and ({\rm ii}) $\sup_{x \in S_\sigma} f(x) < (1-\epsilon) \inf_{x \in A_\sigma \cup A'_\sigma} f(x)$.
% \begin{itemize}
% \item Any path in $\X$ from $A$ to $A'$ intersects $S$.
% \item $\sup_{x \in S_\sigma} f(x) < (1-\epsilon) \inf_{x \in A_\sigma \cup A'_\sigma} f(x)$.
% \end{itemize}
\end{defn}
Under this definition, $A_\sigma$ and $A'_\sigma$ must lie within $\X$, otherwise
the right-hand side of the inequality is zero. $S_\sigma$ need not be contained in $\X$.

\subsection{Consistency and rate of convergence}

We start with a result for Algorithm 1, under the settings $\alpha \geq \sqrt{2}$
and $k \sim d \log n$. The analysis section also has results for $1 \leq \alpha \leq 2$
and $k \sim (2/\alpha)^d d \log n$. The result states general saliency conditions under
which a given level $\lambda$ of the cluster tree is recovered at level $r(\lambda)$
of the estimator. The mapping $r$ is of the form $\paren{\frac{k}{nv_d\lambda}}^{1/d}$ (see Definition \ref{defn:rlambda}),
where $v_d$ is the volume of the unit ball in $\R^d$.
%Section \ref{sec:separation}.

\begin{thm}
There is an absolute constant $C$ such that the following holds. Pick any
$0 < \delta, \epsilon < 1$, and run Algorithm 1 on a sample $X_n$ of size $n$
drawn from $f$, with settings
$$ \sqrt{2} \leq \alpha \leq 2
\mbox{\ \ \ and\ \ \ } k \geq C \cdot \frac{d \log n}{\epsilon^2} \cdot \log^2 \frac{1}{\delta} .$$
Then there is a mapping $r: [0,\infty) \rightarrow [0, \infty)$ such that
the following holds with probability at least $1-\delta$.
Consider {any} pair of connected subsets $A,A' \subset \X$ such that
$A,A'$ are $(\sigma,\epsilon)$-separated for $\epsilon$ and some $\sigma > 0$.
Let $\lambda = \inf_{x \in A_\sigma \cup A'_\sigma} f(x)$. If
$
n
\ \geq \
\frac{k}{v_d (\sigma/2)^d \lambda} \left( 1 + \frac{\epsilon}{2} \right), \text{ then:}
$
%where $v_d$ is the volume of the unit ball in $\R^d$, then:
\begin{enumerate}
\item {\it Separation.} $A \cap X_n$ is disconnected from $A' \cap X_n$
in $G_{r(\lambda)}$.
\item {\it Connectedness.} $A \cap X_n$ and $A' \cap X_n$ are each
connected in $G_{r(\lambda)}$.
\end{enumerate}
\label{thm:upper-bound}
\end{thm}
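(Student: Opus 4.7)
The plan is to deduce both conclusions from a single uniform concentration statement for empirical ball masses, calibrated so that $r(\lambda)$ sits at the borderline between the high-density regime (where $r(\lambda)$-balls look dense enough to enter $G_{r(\lambda)}$) and the low-density regime (where they look too sparse). I would take $r(\lambda) = \paren{(1+\epsilon/2)\,k/(n\,v_d\,\lambda)}^{1/d}$, so that $v_d\,r(\lambda)^d\,\lambda = (1+\epsilon/2)\,k/n$; the sample-size hypothesis then translates cleanly to $r(\lambda) \leq \sigma/2$, and since $\alpha \leq 2$ this bounds the maximal graph-edge length $\alpha r(\lambda)$ by $\sigma$. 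Applying a relative Chernoff / VC bound to the class of Euclidean balls in $\R^d$ (VC dimension $O(d)$) with the stated lower bound on $k$ yields, with probability at least $1-\delta$, a uniform multiplicative $\epsilon/2$-approximation of $f(B(x,r))$ by $f_n(B(x,r))$ on all balls whose true mass is of order $k/n$.

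Two deterministic consequences follow. \emph{Inclusion:} any $x_i \in X_n$ with $B(x_i,r(\lambda)) \subseteq A_\sigma \cup A'_\sigma$ has $f(B(x_i,r(\lambda))) \geq (1+\epsilon/2)k/n$, so $f_n(B(x_i,r(\lambda))) \geq k/n$ and therefore $r_k(x_i) \leq r(\lambda)$, placing $x_i$ in $G_{r(\lambda)}$. \emph{Exclusion:} any $x_i$ with $B(x_i,r(\lambda)) \subseteq S_\sigma$ has $f(B(x_i,r(\lambda))) \leq (1-\epsilon)(1+\epsilon/2)k/n < k/n$ and is excluded. Since $r(\lambda) \leq \sigma/2$, inclusion applies to every sample point in $A_{\sigma/2} \cup A'_{\sigma/2}$ (in particular to all of $A \cap X_n$ and $A' \cap X_n$), while exclusion applies to every sample point in $S_{\sigma/2}$. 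For the separation conclusion I then argue by contradiction: a path $x_{i_0}, \ldots, x_{i_m}$ in $G_{r(\lambda)}$ from $A \cap X_n$ to $A' \cap X_n$ produces, by concatenating the segments $[x_{i_t}, x_{i_{t+1}}]$, a continuous curve from $A$ to $A'$ whose segments have length $\leq \alpha r(\lambda) \leq \sigma$; such a curve must meet $S$, and whichever segment contains the crossing point has at least one endpoint within $\sigma/2$ of $S$, landing in $S_{\sigma/2}$ and contradicting exclusion.

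For connectedness, inclusion puts every sample point of $A$ (resp.\ $A'$) into the vertex set, and I would invoke the continuum percolation result of Theorem~\ref{thm:connectedness} to show that the induced subgraph on $A \cap X_n$ is connected. The essential geometric input is that, when $\alpha \geq \sqrt{2}$, a neighborhood of $A$ can be tiled by small axis-aligned cubes so that any sample point in one cube is within $\alpha r(\lambda)$ of any sample point in an adjacent cube; uniform concentration together with the density floor $\lambda$ on $A_\sigma$ guarantees each relevant cube contains a sample point, and connectivity of $A$ propagates through the cube-adjacency graph. I expect the main obstacle to lie in the separation argument's implicit handling of the geometry of $\X$: the polygonal curve assembled from the graph path need not lie in $\X$, so the $\X$-path definition of the separator $S$ does not apply verbatim. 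The natural remedy is to replace each straight segment by a short detour staying in the support---feasible because sample points near $A \cup A'$ sit well inside the high-density region and any detour short enough to remain in $A_\sigma \cup A'_\sigma \cup S_\sigma$ still forces a crossing of $S_{\sigma/2}$ close to the original segment.
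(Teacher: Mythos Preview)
Your overall structure matches the paper's proof: set $r(\lambda)$ so that $v_d r(\lambda)^d \lambda \approx (1+\epsilon/2)k/n$ (the paper writes this as $k/n + (C_\delta/n)\sqrt{kd\log n}$, which coincides under the lower bound on $k$), invoke VC-type concentration over balls for inclusion and exclusion, and treat separation and connectedness separately. Your separation argument is essentially Lemma~\ref{lemma:separation}(c): the paper phrases it as ``the crossing edge has length at least $2(\sigma-r) > \alpha r$'', you phrase it as ``one endpoint of the crossing edge lands in $S_{\sigma/2}$ and is therefore excluded''; both readings rely on the straight-line segment between consecutive graph vertices meeting $S$. The worry you raise about that segment possibly leaving $\X$ is legitimate, but the paper's argument makes exactly the same tacit assumption and does not address it either, so this is not where the real content lies.

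The substantive gap is in your account of connectedness. You are right to point to Theorem~\ref{thm:connectedness}, but its mechanism is \emph{not} cube tiling. With axis-aligned cubes of side $s$, two sample points in face-adjacent cubes can be as far apart as $s\sqrt{d+3}$; forcing this below $\alpha r$ for $\alpha=\sqrt{2}$ requires $s \lesssim r/\sqrt{d}$, and then guaranteeing a sample point in every relevant cube needs $s^d \lambda \gtrsim (d\log n)/n$, i.e.\ $r^d \lambda \gtrsim d^{d/2}(d\log n)/n$. That is exponentially worse in $d$ than the hypothesis $v_d r^d \lambda \gtrsim (d\log n)/n$ and would only recover the weaker Lemma~\ref{lemma:connectedness} (which indeed forces $k$ exponential in $d$ once $\alpha < 2$). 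The paper's actual argument is dimension-free and uses \emph{half-balls}: given a continuous path $P \subset A$ from $x$ to $x'$, inductively let $\pi(x_i)$ be the furthest point of $P$ within distance $r$ of $x_i$, and take $x_{i+1}$ to be any sample point in the half of $B(\pi(x_i),r)$ on the $x_i$ side of the hyperplane through $\pi(x_i)$ orthogonal to $x_i-\pi(x_i)$. Uniform concentration over half-balls (still VC dimension $O(d)$) supplies such a point once $v_d r^d \lambda \gtrsim (d\log n)/n$, and the half-space constraint kills the cross term to give $\|x_i-x_{i+1}\|^2 \leq \|x_i-\pi(x_i)\|^2 + \|\pi(x_i)-x_{i+1}\|^2 \leq 2r^2$. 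That Pythagorean step, not any tiling, is what makes $\alpha=\sqrt{2}$ suffice with $k \sim d\log n$, and it is the idea your sketch is missing.
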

The two parts of this theorem -- separation and connectedness -- are proved
in Sections~\ref{sec:separation} and \ref{sec:connectedness}, respectively.

A similar result holds for Algorithm 2 under stronger requirements on $k$.
\begin{thm}
Theorem~\ref{thm:upper-bound} applies also to Algorithm 2, provided the following
additional condition on $k$ is met:
$
k \geq \frac{\Lambda}{\lambda} \cdot C d \log n \cdot \log \frac{1}{\delta},
$
where $\Lambda = \sup_{x \in \X} f(x)$.
\label{thm:upper-bound2}
\end{thm}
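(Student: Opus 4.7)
The graphs $G^{\nn}_r$ and $G_r$ share the same vertex set $\{x_i : r_k(x_i) \leq r\}$, and every edge $(x_i, x_j)$ of $G^{\nn}_r$ satisfies $\|x_i - x_j\| \leq \alpha \min(r_k(x_i), r_k(x_j)) \leq \alpha r$, hence is also an edge of $G_r$. Thus $G^{\nn}_r$ is a subgraph of $G_r$, and the disconnection of $A \cap X_n$ from $A' \cap X_n$ in $G_{r(\lambda)}$ established in Theorem~\ref{thm:upper-bound} transfers immediately to $G^{\nn}_{r(\lambda)}$. The strengthened $k$ hypothesis is not needed for this half.

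\textbf{Connectedness.} The obstacle here is that Algorithm~1's argument only produces hops of length at most $\alpha r(\lambda)$, whereas the mutual $k$-NN edge rule demands $\|x_i - x_j\| \leq \alpha \min(r_k(x_i), r_k(x_j))$, which can be much smaller in regions where the density is close to its peak $\Lambda$. I would re-run the continuum-percolation-style argument underlying Theorem~\ref{thm:upper-bound} at a finer length scale $r_0 = (k/(nv_d\Lambda))^{1/d}$, chosen so that hops of size $O(r_0)$ are automatically admissible as mutual $k$-NN edges. Concretely, cover $A_\sigma$ and $A'_\sigma$ by balls of radius $\Theta(r_0)$ and show, with probability at least $1-\delta$, that (a) every such ball intersected with $A \cup A'$ contains at least one sample point, and (b) every sample point $x_i \in A_\sigma \cup A'_\sigma$ satisfies $r_k(x_i) \geq c\, r_0$ for some absolute constant $c > 0$.

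\textbf{Role of the extra factor $\Lambda/\lambda$, and main obstacle.} Both (a) and (b) reduce to a uniform VC bound on empirical probability over Euclidean balls. In (a) a cover ball has $f$-mass at least $v_d r_0^d \lambda$, so its expected sample count is $k\lambda/\Lambda$; the hypothesis $k \geq (\Lambda/\lambda)\, C d \log n \log(1/\delta)$ is precisely what lifts this expected count above the VC concentration threshold. In (b) the bound $f \leq \Lambda$ caps the $f$-mass inside $B(x_i, c\, r_0)$ at $c^d(k/n)$, so for small enough $c$ this is well below $k/n$ and the same VC bound yields the pointwise lower bound on $r_k(x_i)$. Given (a) and (b), two sample points in neighboring cover balls lie within distance $O(r_0) \leq \alpha \min(r_k(x_i), r_k(x_j))$, so they are joined in $G^{\nn}_{r(\lambda)}$ (they belong to the vertex set since $r_0 \leq r(\lambda)$), and chaining along the cover yields paths inside both $A \cap X_n$ and $A' \cap X_n$. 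The main obstacle will be matching constants: choosing the covering radius and the constant $c$ in (b) so that $\|x_i - x_j\| \leq \alpha \min(r_k(x_i), r_k(x_j))$ holds exactly rather than only up to universal constants, which requires using the $\alpha \geq \sqrt{2}$ slack inherited from Theorem~\ref{thm:upper-bound} to absorb the triangle-inequality loss between adjacent cover balls. Beyond this bookkeeping, no new geometric ideas are needed.
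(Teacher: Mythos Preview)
Your strategy is exactly the paper's: separation is inherited because $G^{\nn}_r \subseteq G_r$, and connectedness is obtained by re-running the percolation argument at the smaller scale $r_o = (k/(2nv_d\Lambda))^{1/d}$, after first showing $r_k(x) > r_o$ for every sample point (your item (b); their Lemma~5.1), so that hops of length $\leq \alpha r_o$ are automatically mutual $k$-NN edges.

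One caution about your concrete description. You phrase the chaining as ``cover $A_\sigma$ by balls of radius $\Theta(r_0)$'' and view the $\alpha \geq \sqrt{2}$ as ``slack'' to absorb triangle-inequality losses between neighboring cover balls. If you literally use a plain ball cover, the triangle inequality forces cover radius $\approx (\alpha/2) r_0$, and the nonemptiness condition becomes $k \gtrsim (2/\alpha)^d (\Lambda/\lambda)\, d\log n$, an extra $(\sqrt{2})^d$ factor not present in the theorem. The paper avoids this by re-using the \emph{half-ball} construction of Theorem~\ref{thm:connectedness} at scale $r_o$: each half-ball of radius $r_o$ inside $A_\sigma$ has mass $\geq \tfrac{1}{2} v_d r_o^d \lambda = (k/4n)(\lambda/\Lambda)$, so $k \geq 8C_\delta (\Lambda/\lambda) d\log n$ suffices with no exponential loss. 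The constant $\sqrt{2}$ is not slack for a triangle inequality; it is the exact output of the Pythagorean step $\|x_i - x_{i+1}\|^2 \leq 2r_o^2$ in the half-ball hop. Since you already cite ``the continuum-percolation-style argument underlying Theorem~\ref{thm:upper-bound},'' just make sure you actually invoke that half-ball construction rather than a naive covering.
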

In the analysis section, we give a lower bound (Lemma~\ref{lem:lowerboundexample})
that shows why this dependence on $\Lambda/\lambda$ is needed.

Finally, we point out that these finite-sample results imply consistency
(Definition~\ref{defn:consistency}): as $n \rightarrow \infty$, take
$k_n = (d \log n)/\epsilon_n^2$ with any schedule of $\braces{\epsilon_n}$
such that $\epsilon_n \rightarrow 0$ and $k_n/n \rightarrow 0$.
Under mild uniform continuity conditions, any two connected components $A,A'$ of $\{f \geq \lambda\}$
are $(\sigma,\epsilon)$-separated for some $\sigma, \epsilon > 0$ (see appendix);
thus they are identified given large enough $n$.

\section{Analysis of Algorithm 1}

\subsection{Separation}
\label{sec:separation}

Both cluster tree algorithms depend heavily on the radii $r_k(x)$: the distance within
which $x$'s nearest $k$ neighbors lie (including $x$ itself). The empirical
probability mass of $B(x,r_k(x))$ is $k/n$. To show that $r_k(x)$ is meaningful,
we need to establish that the mass of this ball under density $f$ is also roughly
$k/n$. The uniform convergence of these empirical counts follows from
the fact that balls in $\R^d$ have finite VC dimension, $d+1$.

We also invoke uniform convergence over {\it half-balls}: each of these is
the intersection of a ball with a halfspace through its center.
Using uniform Bernstein-type bounds, we derive basic inequalities which we
use repeatedly.

\begin{lemma}
Assume $k \geq d \log n$, and fix some $\delta > 0$. Then there exists a constant
$C_\delta$ such that with probability $> 1-\delta$, we have that, first, every ball
$B \subset \R^d$ satisfies the following conditions:
\begin{eqnarray*}
f(B) \geq \frac{C_\delta d \log n}{n} & \implies & f_n(B) > 0 \\
f(B) \geq \frac{k}{n} + \frac{C_\delta}{n} \sqrt{kd \log n}
& \implies & f_n(B) \geq \frac{k}{n} \\
f(B) \leq \frac{k}{n} - \frac{C_\delta}{n} \sqrt{kd \log n}
& \implies & f_n(B) < \frac{k}{n}
\end{eqnarray*}
Here $f_n(B) = |X_n \cap B|/n$ is the empirical mass of $B$, while
$f(B)$ is its probability under $f$.
Second, for every half-ball $H \subset \R^d$:
\begin{eqnarray*}
f(H) \geq \frac{C_\delta d \log n}{n} & \implies & f_n(H) > 0.
\end{eqnarray*}
We denote this uniform convergence over balls and half-balls as
event $E_o$.
\label{lemma:convergence-balls}
\end{lemma}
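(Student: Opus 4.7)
The plan is to deduce all four implications from one uniform Bernstein-type VC inequality applied to two set classes. Balls in $\R^d$ have VC dimension $d+1$, and the class of half-balls (intersections of a ball with a closed halfspace through its center) has VC dimension $O(d)$, since it arises as an intersection of two classes of VC dimension $O(d)$. The relative VC inequality of Vapnik (or equivalently Bousquet's Bernstein form of Talagrand's inequality) then implies that with probability at least $1-\delta$, simultaneously for every ball and every half-ball $B$,
\begin{equation*}
|f_n(B) - f(B)| \;\leq\; \sqrt{\frac{C_\delta\, f(B)\, d \log n}{n}} \;+\; \frac{C_\delta\, d \log n}{n},
\end{equation*}
where the constant $C_\delta$ absorbs a $\log(1/\delta)$ factor. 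Call this event $E_o$; this is the only probabilistic step in the whole argument.

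Conditioning on $E_o$, each of the four implications becomes a short algebraic deduction. For the first implication (balls with $f(B) \geq C_\delta d\log n /n$) and the identical half-ball implication, enlarging $C_\delta$ slightly makes the right-hand side of the deviation bound strictly less than $f(B)$, so $f_n(B) > 0$. For the second implication, I would split on whether $f(B) \leq 2k/n$ or not. In the first case $\sqrt{C_\delta f(B) d\log n/n}$ is $O(\sqrt{kd\log n}/n)$, so the hypothesis $f(B) \geq k/n + (C_\delta/n)\sqrt{kd\log n}$ is exactly tuned to give $f_n(B) \geq k/n$ after subtracting the deviation (with $C_\delta$ large enough to absorb the lower-order $C_\delta d\log n/n$ term, which is harmless since $k \geq d\log n$ makes $\sqrt{kd\log n} \geq d\log n$). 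In the second case $f(B) - k/n > f(B)/2$ alone dominates the deviation term once we use $k \geq d\log n$. The third implication is obtained symmetrically by upper-bounding $f_n(B)$ rather than lower-bounding it, using the matching upper tail from $E_o$.

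The main obstacle is choosing the correct form of the uniform deviation bound. A purely additive VC bound of the form $|f_n(B) - f(B)| = O(\sqrt{d\log n / n})$ uniformly is too crude for the second and third implications: in the relevant regime $k/n$ may be much smaller than $1$, and we need balls of mass $p = \Theta(k/n)$ to concentrate at the finer rate $\sqrt{p\, d\log n/n}$ rather than the worst-case $\sqrt{d\log n/n}$. Once the Bernstein-type bound with the $\sqrt{f(B)}$ factor is in hand and the VC dimension of half-balls has been verified, the rest is bookkeeping about constants, essentially the kind of calculation that accompanies any VC-type concentration statement.
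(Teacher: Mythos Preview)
Your approach is essentially the paper's: invoke a Bernstein-type VC deviation bound over balls and half-balls (the paper cites the version in \citet{BBL04}), then derive each implication by elementary algebra using $k \geq d\log n$ to absorb the additive $d\log n/n$ term into $\sqrt{kd\log n}/n$.

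The one technical difference worth noting is in the second implication. The paper uses the form of the bound with the \emph{empirical} mass under the square root, $f(B)-f_n(B)\le \beta_n^2+\beta_n\sqrt{f_n(B)}$, so that $f_n(B)<k/n$ immediately gives $f(B)<k/n+\beta_n^2+\beta_n\sqrt{k/n}$ by contrapositive, with no case analysis. Your version with $\sqrt{f(B)}$ also goes through, but the split at $2k/n$ is slightly off: in your Case~2 you need $f(B)/2$ to dominate $\sqrt{C_\delta f(B)\,d\log n/n}$, which requires $f(B)\ge 4C_\delta\,d\log n/n$, so the threshold should be a $C_\delta$-dependent multiple of $k/n$ rather than $2k/n$. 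This is harmless bookkeeping, not a gap.
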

\begin{proof}
See appendix. $C_\delta = 2C_o \log (2/\delta)$,
where $C_o$ is the absolute constant from Lemma~\ref{lemma:convergence-balls1}.
\end{proof}
We will typically preface other results by a statement like ``Assume $E_o$.''
It is to be understood that $E_o$ occurs with probability at least $1-\delta$
over the random sample $X_n$, where $\delta$ is henceforth fixed. The constant
$C_\delta$ will keep reappearing through the paper.

For any cluster $A \subset \X$, there is a certain scale $r$ at which every
data point in $A$ appears in $G_r$. What is this $r$?
\begin{lemma}
Assume $E_o$. Pick any set $A \subset \X$, and let
$\lambda = \inf_{x \in A_\sigma} f(x)$. If $r < \sigma$ and
$
v_d r^d \lambda \ \geq \  \frac{k}{n} + \frac{C_\delta}{n} \sqrt{kd \log n},
$
then $G_r$ contains every point in $A_{\sigma - r} \cap X_n$.
\label{lemma:active-points}
\end{lemma}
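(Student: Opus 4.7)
The plan is to show, for any data point $x \in A_{\sigma-r} \cap X_n$, that the ball $B(x,r)$ contains at least $k$ sample points, i.e.\ $r_k(x) \leq r$, which by definition means $x$ is a node of $G_r$.

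The key geometric observation is that $B(x,r) \subseteq A_\sigma$. Indeed, since $x \in A_{\sigma - r}$, there exists $a \in A$ with $\|x - a\| \leq \sigma - r$, and then for any $y \in B(x,r)$ the triangle inequality gives $\|y - a\| \leq \|y - x\| + \|x - a\| \leq r + (\sigma - r) = \sigma$, so $y \in A_\sigma$. This uses the hypothesis $r < \sigma$ to ensure $A_{\sigma-r}$ is nonempty/meaningful.

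Once the containment is established, every point of $B(x,r)$ has density at least $\lambda = \inf_{x' \in A_\sigma} f(x')$, so
\[
f(B(x,r)) \;\geq\; v_d r^d \lambda \;\geq\; \frac{k}{n} + \frac{C_\delta}{n}\sqrt{kd \log n},
\]
where the second inequality is exactly the hypothesis of the lemma. Invoking the event $E_o$ from Lemma~\ref{lemma:convergence-balls} (the second implication for balls), we conclude $f_n(B(x,r)) \geq k/n$, i.e.\ $B(x,r)$ contains at least $k$ sample points, so $r_k(x) \leq r$ and $x \in G_r$.

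There is no real obstacle: the argument is a direct application of the uniform convergence in $E_o$ combined with a one-line triangle inequality to get the ball containment. The only place where one needs to be slightly careful is the role of $r < \sigma$, which guarantees $\sigma - r > 0$ so that the ``buffer'' $A_{\sigma - r}$ around $A$ is large enough for $B(x,r)$ to stay inside $A_\sigma$ where the lower bound $f \geq \lambda$ holds.
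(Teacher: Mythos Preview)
Your proof is correct and follows exactly the same approach as the paper's own proof, which simply states that any $x \in A_{\sigma-r}$ has $f(B(x,r)) \geq v_d r^d \lambda$ and then invokes Lemma~\ref{lemma:convergence-balls}. You have merely spelled out the triangle-inequality containment $B(x,r) \subseteq A_\sigma$ that the paper leaves implicit.
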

\begin{proof}
Any point $x \in A_{\sigma-r}$ has $f(B(x,r)) \geq v_d r^d \lambda$; and thus,
by Lemma~\ref{lemma:convergence-balls}, has at least $k$ neighbors within radius $r$.
\end{proof}

In order to show that two separate clusters $A$ and $A'$ get distinguished in the
cluster tree, we need to exhibit a scale $r$ at which every point in $A$ and $A'$
is active, but there is no path from $A$ to $A'$.

\begin{lemma}
Assume $E_o$. Suppose sets $A, A' \subset \X$ are $(\sigma, \epsilon)$-separated
by set $S$, and let $\lambda = \inf_{x \in A_\sigma \cup A'_\sigma} f(x)$. Pick
$0 < r < \sigma$ such that
\begin{equation*}
 \frac{k}{n} + \frac{C_\delta}{n} \sqrt{kd \log n} \leq  v_d r^d \lambda <
 \paren{\frac{k}{n} - \frac{C_\delta}{n} \sqrt{kd \log n}}\cdot\frac{1}{1 - \epsilon}.
\end{equation*}
 Then:
\begin{enumerate}
\item[(a)] $G_r$ contains all points in $(A_{\sigma-r} \cup A'_{\sigma-r}) \cap X_n$.
\item[(b)] $G_r$ contains no points in $S_{\sigma-r} \cap X_n$.
\item[(c)] If $r < 2\sigma/(\alpha + 2)$, then $A \cap X_n$ is disconnected from
$A' \cap X_n$ in $G_r$.
\end{enumerate}
\label{lemma:separation}
\end{lemma}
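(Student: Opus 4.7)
The plan is to dispatch each of parts (a), (b), (c) in turn, with (c) relying on (b).

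For part (a), I would invoke Lemma~\ref{lemma:active-points} twice, once for $A$ and once for $A'$, at the common scale $r$. Both applications use the same $\sigma$ and $\lambda$, since by the definition of $\lambda$ we have $\inf_{x\in A_\sigma}f(x)\ge\lambda$ and $\inf_{x\in A'_\sigma}f(x)\ge\lambda$. The lower bound $v_d r^d\lambda\ge k/n+(C_\delta/n)\sqrt{kd\log n}$ is exactly the hypothesis of that lemma, and $r<\sigma$ is assumed, so every point of $(A_{\sigma-r}\cup A'_{\sigma-r})\cap X_n$ lies in $V_r$.

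For part (b), take any $x\in S_{\sigma-r}$. Then $B(x,r)\subset S_{(\sigma-r)+r}=S_\sigma$. Using the separation property $\sup_{y\in S_\sigma}f(y)<(1-\epsilon)\lambda$, I get
\[
f(B(x,r))\ \le\ v_d r^d\sup_{y\in S_\sigma}f(y)\ <\ v_d r^d(1-\epsilon)\lambda.
\]
The upper bound in the hypothesis gives $v_d r^d(1-\epsilon)\lambda<k/n-(C_\delta/n)\sqrt{kd\log n}$, so the third implication of Lemma~\ref{lemma:convergence-balls} yields $f_n(B(x,r))<k/n$. Thus $B(x,r)$ contains fewer than $k$ sample points, so $r_k(x)>r$ and $x\notin V_r$.

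For part (c), I argue by contradiction: suppose there is a path $x_{i_1},\ldots,x_{i_m}$ in $G_r$ with $x_{i_1}\in A$ and $x_{i_m}\in A'$, so that consecutive vertices are at distance at most $\alpha r$. Interpolating the edges by straight line segments gives a continuous curve $\gamma$ from a point of $A$ to a point of $A'$; by the separator property of $S$ the curve meets $S$ at some point $p$, lying on some segment $[x_{i_j},x_{i_{j+1}}]$. Then $\min(\|p-x_{i_j}\|,\|p-x_{i_{j+1}}\|)\le\alpha r/2$, so at least one endpoint, call it $x$, satisfies $x\in S_{\alpha r/2}$. The hypothesis $r<2\sigma/(\alpha+2)$ rearranges to $\alpha r/2<\sigma-r$, hence $S_{\alpha r/2}\subset S_{\sigma-r}$ and $x\in S_{\sigma-r}$. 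By part (b), $x\notin V_r$, contradicting $x\in G_r$.

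The only delicate point, and the place where I expect to do the most bookkeeping, is part (c): one must ensure that the straight-line interpolation of the discrete path is a legitimate object to which the separator property applies. This is immediate if one reads separation as a topological statement in the ambient space, and is essentially forced by the fact that consecutive sample points are at distance $\le\alpha r<2\sigma$, keeping the interpolated curve within the $\sigma$-neighborhood where $f$ is controlled. Parts (a) and (b) are essentially single applications of Lemmas~\ref{lemma:active-points} and~\ref{lemma:convergence-balls}, respectively, with the radius $r$ chosen precisely to straddle the two density levels $\lambda$ and $(1-\epsilon)\lambda$.
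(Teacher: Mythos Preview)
Your proof is correct and follows essentially the same route as the paper. Parts (a) and (b) are identical to the paper's argument. For part (c) there is only a cosmetic difference: the paper phrases the contradiction via edge length (an edge crossing $S$ with both endpoints outside $S_{\sigma-r}$ must have length at least $2(\sigma-r)>\alpha r$), whereas you phrase the dual contradiction via vertex location (one endpoint of the crossing edge lands in $S_{\alpha r/2}\subset S_{\sigma-r}$, violating part (b)). Both rely on the same rearrangement of $r<2\sigma/(\alpha+2)$, and the paper is equally informal about the straight-line interpolation staying where the separator property applies.
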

\begin{proof}
Part (a) is directly from Lemma~\ref{lemma:active-points}.
For (b), any point $x \in S_{\sigma-r}$ has $f(B(x,r)) < v_d r^d \lambda(1-\epsilon)$;
and thus, by Lemma~\ref{lemma:convergence-balls},
has strictly fewer than $k$ neighbors within distance $r$.

For (c), since points in $S_{\sigma-r}$ are absent from $G_r$, any path from
$A$ to $A'$ in that graph must have an edge across $S_{\sigma-r}$. But any such
edge has length at least $2(\sigma-r) > \alpha r$ and is thus not in $G_r$.
\end{proof}

\begin{defn}
Define $r(\lambda)$ to be the value of $r$ for which
$ v_d r^d \lambda = \frac{k}{n} + \frac{C_\delta}{n} \sqrt{kd \log n} $.
\label{defn:rlambda}
\end{defn}

\begin{cor}
\label{cor:separation}
The conditions of Lemma~\ref{lemma:separation} are satisfied by $r = r(\lambda)$
if $r(\lambda) < 2\sigma/(\alpha + 2)$ and $k \geq 4C_\delta^2 (d/\epsilon^2) \log n$.
\end{cor}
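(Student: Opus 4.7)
The plan is to verify directly that the choice $r = r(\lambda)$ from Definition~\ref{defn:rlambda} meets each hypothesis of Lemma~\ref{lemma:separation}: (i) $0 < r < \sigma$, (ii) the two-sided bound
$$\frac{k}{n} + \frac{C_\delta}{n}\sqrt{kd \log n} \ \leq\ v_d r^d \lambda \ <\ \left(\frac{k}{n} - \frac{C_\delta}{n}\sqrt{kd \log n}\right)\frac{1}{1-\epsilon},$$
and (iii) $r < 2\sigma/(\alpha+2)$ needed for part (c). Condition (iii) is one of the two standing assumptions of the corollary; condition (i) follows from (iii) since $2/(\alpha+2) \leq 1$ whenever $\alpha \geq 0$; and the \emph{left-hand} inequality in (ii) holds with equality by the very definition of $r(\lambda)$. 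So the only work is to check the strict right-hand inequality in (ii), which is where the second hypothesis $k \geq 4 C_\delta^2 (d/\epsilon^2)\log n$ must be used.

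For that step I would abbreviate $a = k/n$ and $b = (C_\delta/n)\sqrt{kd\log n}$, so that the remaining inequality is $a + b < (a-b)/(1-\epsilon)$. Multiplying through by $1-\epsilon > 0$ and collecting terms reduces this to the equivalent $2b < \epsilon(a+b)$, which is comfortably implied by the cleaner sufficient condition $2b \leq \epsilon a$. Squaring the latter gives $4 C_\delta^2 (kd\log n)/n^2 \leq \epsilon^2 k^2/n^2$, i.e.\ exactly $k \geq 4 C_\delta^2 (d/\epsilon^2)\log n$. There is no genuine obstacle here: the entire argument is routine algebraic bookkeeping, and the only mildly subtle point is recognising that one needs enough slack in $k$ to absorb the two $\pm b$ deviations symmetrically while still fitting inside a relative window of width $\epsilon$ around $k/n$.
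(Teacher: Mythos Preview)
Your argument is correct and is exactly the routine verification the paper has in mind; the paper states the corollary without proof, and your check of the two-sided inequality via the substitution $a=k/n$, $b=(C_\delta/n)\sqrt{kd\log n}$ and the reduction to $2b \leq \epsilon a$ is the natural way to fill in the omitted details.
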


\subsection{Connectedness}
\label{sec:connectedness}

We need to show that points in $A$ (and similarly $A'$) are connected in
$G_{r(\lambda)}$. First we state a simple bound (proved in the appendix)
that works if $\alpha = 2$ and $k \sim d \log n$; later we consider smaller $\alpha$.
\begin{lemma}
Assume $E_o$. Let $A$ be a connected set in $\X$ with $\lambda = \inf_{x \in A_\sigma} f(x)$.
Suppose $1 \leq \alpha \leq 2$. Then $A \cap X_n$ is connected in $G_r$ whenever
$r \leq  2 \sigma/(2 + \alpha)$ and
$$ v_d r^d \lambda
\ \ \geq \ \
\max\left\{ \left( \frac{2}{\alpha} \right)^d \frac{C_\delta d \log n}{n},
\ \ \frac{k}{n} + \frac{C_\delta}{n} \sqrt{kd \log n}
\right\}.
$$
\label{lemma:connectedness}
\end{lemma}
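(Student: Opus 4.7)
The plan is to exhibit, for any two sample points $x,y\in A\cap X_n$, a path in $G_r$ connecting them. Since $A$ is connected, fix a continuous $P:[0,1]\to A$ with $P(0)=x$, $P(1)=y$. The idea is to cover the image of $P$ by balls of radius $\alpha r/2$, each of which contains at least one sample point by the density hypothesis, and then chain these sample points together by a topological (nerve) argument.

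For every $t\in[0,1]$, the ball $B(P(t),\alpha r/2)$ lies inside $A_\sigma$, because $\alpha r/2\leq r\leq\sigma$ (using $\alpha\leq 2$ together with the hypothesis $r\leq 2\sigma/(\alpha+2)\leq\sigma$). Hence
\[
f\bigl(B(P(t),\alpha r/2)\bigr)\;\geq\; v_d(\alpha r/2)^d\,\lambda\;\geq\;\frac{C_\delta d\log n}{n},
\]
by the first bound in the density hypothesis. Under event $E_o$, Lemma~\ref{lemma:convergence-balls} guarantees $X_n\cap B(P(t),\alpha r/2)\neq\emptyset$. For each $x^*\in X_n$ let $I_{x^*}=\{t\in[0,1]:\|P(t)-x^*\|\leq\alpha r/2\}$; this is closed (by continuity of $P$) and the family $\{I_{x^*}\}_{x^*\in X_n}$ is a finite closed cover of $[0,1]$. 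Note that $0\in I_x$ and $1\in I_y$ trivially.

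A standard topological fact—any finite closed cover of a connected space has a connected ``nerve'' (if the nerve split into two components $U,V$, the unions $\bigcup_{I\in U}I$ and $\bigcup_{I\in V}I$ would form a disjoint closed partition of $[0,1]$)—gives a sequence $x=x^{(0)},x^{(1)},\ldots,x^{(m)}=y$ in $X_n$ with $I_{x^{(i)}}\cap I_{x^{(i+1)}}\neq\emptyset$ for each $i$. Picking $t_i$ in this intersection, the triangle inequality yields
\[
\|x^{(i)}-x^{(i+1)}\|\;\leq\;\|x^{(i)}-P(t_i)\|+\|P(t_i)-x^{(i+1)}\|\;\leq\;\tfrac{\alpha r}{2}+\tfrac{\alpha r}{2}\;=\;\alpha r.
\]
Moreover, each $x^{(i)}$ lies in $A_{\alpha r/2}\subseteq A_{\sigma-r}$, the inclusion being precisely a rearrangement of $r\leq 2\sigma/(\alpha+2)$. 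By Lemma~\ref{lemma:active-points}, applied using the second bound in the density hypothesis, every $x^{(i)}$ is a node of $G_r$, so each edge $(x^{(i)},x^{(i+1)})$ is present, and the chain is a path in $G_r$.

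The main subtlety is that the density hypothesis $v_d r^d\lambda\geq(2/\alpha)^d C_\delta d\log n/n$ is \emph{tight}: it just barely guarantees a sample point in every ball of radius $\alpha r/2$ inside $A_\sigma$, and nothing smaller. A naive ``chain of centers'' argument—fixing a discrete sequence of waypoints $z_0,\ldots,z_m$ on $P$ and selecting one sample point $x^{(i)}$ in each $B(z_i,\alpha r/2)$—would only give $\|x^{(i)}-x^{(i+1)}\|\leq 3\alpha r/2$, above the edge threshold of $G_r$. The nerve argument circumvents this loss by forcing consecutive chain elements to share a common anchor $P(t_i)$, which halves the slack and lands exactly at the threshold $\alpha r$.
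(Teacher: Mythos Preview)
Your proof is correct, and it takes a genuinely different route from the paper's. The paper also chains sample points along the path $P$, but it does so via a discretization-and-limit argument: it places waypoints $z_0,\ldots,z_m$ on $P$ with spacing $\gamma r$, picks a sample point $x_i$ in each ball $B(z_i,(\alpha-\gamma)r/2)$, and then the triangle inequality gives $\|x_{i+1}-x_i\|\leq (\alpha-\gamma)r/2+\gamma r+(\alpha-\gamma)r/2=\alpha r$. The density condition needed is $v_d((\alpha-\gamma)r/2)^d\lambda\geq C_\delta d\log n/n$, and letting $\gamma\to 0$ recovers exactly the stated hypothesis. So your final paragraph is slightly off target: the ``chain of centers'' approach \emph{does} work, it just needs this extra limiting trick to trade waypoint spacing for ball radius.

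Your nerve argument is a cleaner alternative: by letting the sets $I_{x^*}$ be indexed by the sample points themselves rather than by waypoints, you force consecutive chain elements to share a common anchor on $P$ automatically, so the triangle inequality has only two terms instead of three and you land at $\alpha r$ directly. The cost is invoking a topological fact about finite closed covers of $[0,1]$, but that fact is easy and you prove it inline. The paper's argument is more elementary; yours avoids the limit and is arguably more transparent about why the constant $(2/\alpha)^d$ is exactly right.
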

Comparing this to the definition of $r(\lambda)$, we see that choosing
$\alpha = 1$ would entail $k \geq 2^d$, which is undesirable. We can get
a more reasonable setting of $k \sim d \log n$ by choosing $\alpha = 2$,
but we'd like $\alpha$ to be as small as possible. A more refined argument
shows that $\alpha \approx \sqrt{2}$ is enough.

\begin{thm}
Assume $E_o$. Let $A$ be a connected set in $\X$ with $\lambda = \inf_{x \in A_\sigma} f(x)$.
Suppose $\alpha \geq \sqrt{2}$.
Then $A \cap X_n$ is connected in $G_r$ whenever $r \leq \sigma/2$ and
$$ v_d r^d \lambda
\ \ \geq \ \
\max\left\{ \frac{4 C_\delta d \log n}{n},
\ \ \frac{k}{n} + \frac{C_\delta}{n} \sqrt{kd \log n}
\right\}.
$$
\label{thm:connectedness}
\end{thm}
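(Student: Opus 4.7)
The plan is to show that any two $x, y \in A \cap X_n$ can be joined by a chain of active sample points $x = z_0, z_1, \ldots, z_N = y$ with consecutive hops within $\alpha r$, so that each link is an edge of $G_r$. The argument parallels Lemma~\ref{lemma:connectedness}, but replaces the small radius-$\alpha r/2$ covering balls (which cost a factor $(2/\alpha)^d$) with full-radius balls, exploiting the half-ball bound of Lemma~\ref{lemma:convergence-balls} to find samples near the path without paying a dimension-exponential price.

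First I would verify that each sample used in the chain is a vertex of $G_r$. Every intermediate $z_i$ will lie in $B(p_i, r)$ for some $p_i \in A$, so $B(z_i, r) \subseteq A_{2r} \subseteq A_\sigma$ since $r \leq \sigma/2$. Therefore $f(B(z_i, r)) \geq \lambda v_d r^d \geq k/n + (C_\delta/n) \sqrt{kd \log n}$ by hypothesis, and Lemma~\ref{lemma:convergence-balls} yields $f_n(B(z_i, r)) \geq k/n$, i.e., $r_k(z_i) \leq r$. This is precisely where the condition $r \leq \sigma/2$ (rather than $r \leq 2\sigma/(2+\alpha)$ as in Lemma~\ref{lemma:connectedness}) is needed: it buys a buffer of width $r$ around the path on which the density is known to be at least $\lambda$.

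To build the chain, fix a continuous path $P : [0, 1] \to A$ with $P(0) = x, P(1) = y$ and discretize it as $p_0 = x, p_1, \ldots, p_N = y$ with $\|p_i - p_{i+1}\| \leq \eta$ for a small step $\eta$ to be chosen. At each interior $p_i$ I apply the half-ball bound of Lemma~\ref{lemma:convergence-balls} to an appropriately oriented half-ball $H_i$ of $B(p_i, r)$ (of volume $v_d r^d / 2$). Since $p_i \in A$ and $r \leq \sigma$, the ball $B(p_i, r) \subseteq A_\sigma$, so $f(H_i) \geq \lambda v_d r^d / 2 \geq 2 C_\delta d \log n / n$ under the hypothesis $v_d r^d \lambda \geq 4 C_\delta d \log n / n$; the lemma then guarantees a sample $z_i \in H_i \cap X_n$. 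The factor of $4$ in the hypothesis (rather than $2$) supplies the slack needed to apply the half-ball bound to multiple splits of $B(p_i, r)$ at the same $p_i$, as will be used below.

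The main obstacle is orienting the $H_i$ and choosing $\eta$ so that $\|z_i - z_{i+1}\| \leq \alpha r$ for all $i$. The natural first attempt is to split each $B(p_i, r)$ by the hyperplane through $p_i$ perpendicular to the path tangent $v_i = (p_{i+1} - p_i)/\|p_{i+1} - p_i\|$, taking $H_i$ to be the half toward $p_{i+1}$ and $H_{i+1}$ to be the half toward $p_i$. Decomposing $z_{i+1} - z_i$ into components parallel and perpendicular to $v_i$, this orientation bounds the parallel component by roughly $\eta$, but the perpendicular components may conspire to reach magnitudes close to $r$ each and drive $\|z_i - z_{i+1}\|$ up to nearly $2r$---violating the required $\leq \sqrt 2\, r$. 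The condition $\alpha \geq \sqrt 2$ is precisely the threshold at which this can be repaired by refining the split: additionally slicing each $H_i$ by a hyperplane transverse to $v_i$ (each piece is still a half-ball, covered by Lemma~\ref{lemma:convergence-balls}, with mass $\geq C_\delta d \log n / n$ thanks to the factor-of-$4$ slack) lets us match the transverse regions of consecutive balls so that $(\text{parallel})^2 + (\text{perpendicular})^2 \leq 2 r^2 = \alpha^2 r^2$. With the link-by-link distance bound in hand, the chain $x = z_0, \ldots, z_N = y$ lies in $G_r$, so $x$ and $y$ are connected there; as $x, y \in A \cap X_n$ were arbitrary, $A \cap X_n$ is connected in $G_r$.
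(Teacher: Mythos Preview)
Your overall strategy---replace the radius-$\alpha r/2$ covering balls of Lemma~\ref{lemma:connectedness} by full radius-$r$ half-balls so as to avoid the $(2/\alpha)^d$ factor---is exactly the paper's idea, and your activeness argument (each $z_i \in A_r \subseteq A_{\sigma-r}$, hence active by Lemma~\ref{lemma:active-points}) matches the paper. The gap is in the step where you actually obtain $\|z_i - z_{i+1}\| \le \sqrt{2}\,r$.

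Your ``first attempt'' (orient $H_i$ along the path tangent $v_i$) is correctly diagnosed as insufficient: two independently chosen samples in opposite half-balls of essentially the same ball can be nearly $2r$ apart. But your proposed fix---``additionally slicing each $H_i$ by a hyperplane transverse to $v_i$''---does not work as stated. Intersecting a half-ball with another halfspace through the center yields a \emph{quarter}-ball, not a half-ball, and Lemma~\ref{lemma:convergence-balls} gives uniform convergence only over balls and half-balls. The factor-of-$4$ slack lets you halve the mass once, not twice. Even setting aside the VC-class issue, matching transverse regions across consecutive steps becomes delicate once the path curves, since the transverse direction changes from $p_i$ to $p_{i+1}$.

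The paper avoids all of this with a different and cleaner construction: it builds the chain \emph{iteratively}, and at each step centers the half-ball not at a path point $p_i$ but at $\pi(x_i)$, the furthest point along the path within distance $r$ of the \emph{current sample} $x_i$, and orients it toward $x_i$. Then $x_{i+1}$ is any sample in that half-ball, and one line of algebra gives
\[
\|x_i - x_{i+1}\|^2
= \|x_i - \pi(x_i)\|^2 + \|\pi(x_i) - x_{i+1}\|^2 - 2\,(x_i - \pi(x_i))\cdot(x_{i+1} - \pi(x_i))
\le r^2 + r^2,
\]
since the orientation forces the inner product to be nonnegative. No second slice is needed, and only a single half-ball (mass $\ge \tfrac12 v_d r^d \lambda$) is used per step. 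Termination follows because $\pi(x_{i+1})$ is strictly further along the path than $\pi(x_i)$ and there are finitely many samples. This iterative, sample-dependent choice of center and orientation is the missing idea in your proposal.
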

\begin{proof}
Recall that a {\it half-ball} is the intersection of an open ball and a halfspace
through the center of the ball. Formally, it is defined by a center $\mu$, a radius
$r$, and a unit direction $u$:
$$ \{z \in \R^d: \|z - \mu\| < r, (z - \mu) \cdot u > 0 \} .$$
We will describe any such set as ``the half of $B(\mu,r)$ in direction $u$''.
If the half-ball lies entirely in $A_\sigma$, its probability mass is at least
$(1/2) v_d r^d \lambda$. By uniform convergence bounds
(Lemma~\ref{lemma:convergence-balls}), if $v_d r^d \lambda \geq (4 C_\delta d \log n)/n$,
then every such half-ball within $A_\sigma$ contains at least one data point.

Pick any $x, x' \in A \cap X_n$; there is a path $P$ in $A$ with
$x \stackrel{P}{\leadsto} x'$.  We'll identify a sequence of
data points $x_0 = x, x_1, x_2, \ldots$, ending in $x'$, such that
for every $i$, point $x_i$ is active in $G_r$ and $\|x_i - x_{i+1}\| \leq \alpha r$.
This will confirm that $x$ is connected to $x'$ in $G_r$.
\begin{figure}
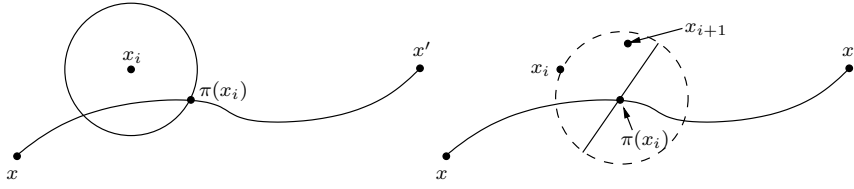

\begin{center}
{\resizebox{2.2in}{!}{\input{path1.pstex_t}}}
%\hskip.4in
{\resizebox{2.2in}{!}{\input{path3.pstex_t}}}
\end{center}
\caption{{\it Left:} $P$ is a path from $x$ to $x'$, and $\pi(x_i)$ is the point
furthest along the path that is within distance $r$ of $x_i$.
{\it Right:} The next point, $x_{i+1} \in X_n$, is chosen from the half-ball of
$B(\pi(x_i), r)$ in the direction of $x_i - \pi(x_i)$.}
\label{fig:construction}
\end{figure}

To begin with, recall that $P$ is a continuous function from $[0,1]$
into $A$. For any point $y \in \X$, define $N(y)$ to be the portion of $[0,1]$
whose image under $P$ lies in $B(y,r)$: that is,
$N(y) = \{0 \leq z \leq 1: P(z) \in B(y,r) \}$. If $y$ is within distance
$r$ of $P$, then $N(y)$ is nonempty. Define $\pi(y) = P(\sup N(y))$, the
furthest point along the path within distance
$r$ of $y$ (Figure~\ref{fig:construction}, left).

The sequence $\braces{x_i}$ is defined iteratively; $x_0 = x$,
and for $i = 0, 1, 2, \ldots:$
\begin{itemize}
\item If $\|x_i - x'\| \leq \alpha r$, set $x_{i+1} = x'$ and stop.
\item By construction, $x_i$ is within distance $r$ of path $P$ and hence $N(x_i)\neq \emptyset$.
\item Let $B$ be the open ball of radius $r$ around $\pi(x_i)$. The half of $B$
in direction $x_i - \pi(x_i)$ contains a data point; this is $x_{i+1}$
(Figure~\ref{fig:construction}, right).
\end{itemize}

The process eventually stops since each $\pi(x_{i+1})$ is further
along path $P$ than $\pi(x_i)$; formally, $\sup N(x_{i+1}) > \sup N(x_i)$.
This is because $\|x_{i+1} - \pi(x_i)\| < r$, so by continuity of the function $P$,
there are points further along $P$ (beyond $\pi(x_i)$) whose distance
to $x_{i+1}$ is still $< r$. Thus $x_{i+1}$ is distinct from $x_0, x_1, \ldots, x_i$.
Since there are finitely many data points, the process must terminate, so
the sequence $\braces{x_i}$ constitutes a path from $x$ to $x'$.

Each $x_i$ lies in $A_r \subseteq A_{\sigma-r}$ and is thus active in $G_r$
under event $E_o$ (Lemma~\ref{lemma:active-points}). Finally, the distance
between successive points is $\lefteqn{\|x_i - x_{i+1}\|^2 }$
\begin{eqnarray*}
%\lefteqn{\|x_i - x_{i+1}\|^2 } \\
& = &
\|x_i - \pi(x_i) + \pi(x_i) - x_{i+1}\|^2 \\
& = &
\|x_i - \pi(x_i)\|^2 + \|\pi(x_i) - x_{i+1}\|^2
- 2 (x_i - \pi(x_i)) \cdot (x_{i+1} - \pi(x_i))
\\
& \leq &
2r^2 \ \ \leq \ \ \alpha^2 r^2,
\end{eqnarray*}
where the second-last inequality is from the definition of half-ball.
\end{proof}

To complete the proof of Theorem~\ref{thm:upper-bound}, take
$k \geq 4C_\delta^2 (d/\epsilon^2) \log n$.
The relationship that defines $r = r(\lambda)$ (Definition~\ref{defn:rlambda})
then implies
$$ \frac{k}{n}
\ \leq \ v_d r^d \lambda \ \leq \
\frac{k}{n} \left( 1 + \frac{\epsilon}{2} \right) .$$
This shows that clusters at density level $\lambda$ emerge when the growing radius
$r$ of the cluster tree algorithm reaches roughly $(k/(\lambda v_d n))^{1/d}$.
In order for $(\sigma, \epsilon)$-separated clusters to be distinguished,
the one additional requirement of Lemma~\ref{lemma:separation} and
Theorem~\ref{thm:connectedness} is that $r = r(\lambda)$ be at most $\sigma/2$;
this is what yields the final lower bound on $n$.

\section{Analysis of Algorithm 2}

The second cluster tree estimator (Figure~\ref{fig:alg2}), based on the
$k$-nearest neighbor graph of the data points, satisfies the same
guarantees as the first, under a more generous setting of $k$.

Let $G^{\nn}_r$ be the $k$-NN graph at radius $r$. We have already observed
that $G^{\nn}_r$ has the same vertices as $G_r$, and a subset of its edges.
Therefore, if clusters are separated in $G_r$, they are certainly separated
in $G^{\nn}_r$: the separation properties of Lemma~\ref{lemma:separation}
carry over immediately to the new estimator. What remains is to establish
a connectedness property, an analogue of Theorem~\ref{thm:connectedness},
for these potentially much sparser graphs.

\subsection{Connectivity properties}

As before, let $f$ be a density on $\X \subset \R^d$.
Let $\Lambda = \sup_{x \in \X} f(x)$; then the smallest radius we expect to
be dealing with is roughly $(k/(n v_d \Lambda))^{1/d}$. To be safe, let's pick
a value slightly smaller than this, and define $r_o = (k/(2n v_d \Lambda))^{1/d}$.

We'll first confirm that $r_o$ is, indeed, a lower bound on the radii $r_k(\cdot)$.
\begin{lemma}
Assume $E_o$. If $k \geq 4C_{\delta}^2 d \log n$, then $r_k(x) > r_o$ for all $x$.
\label{lem:concentration2}
\end{lemma}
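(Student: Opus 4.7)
The plan is to show that under event $E_o$ the ball $B(x,r_o)$ simply cannot contain as many as $k$ sample points, so the $k$-th nearest neighbor of $x$ must lie strictly outside $B(x,r_o)$.

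First I would bound the $f$-mass of $B(x,r_o)$ from above by the trivial estimate
\[
f(B(x,r_o)) \;\le\; \Lambda \cdot v_d r_o^d \;=\; \frac{k}{2n},
\]
using the definition $r_o = (k/(2nv_d\Lambda))^{1/d}$ and the fact that $f \leq \Lambda$ pointwise. This holds for every $x \in \R^d$, so it applies in particular to every sample point.

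Next I would invoke the third implication of Lemma~\ref{lemma:convergence-balls}, which says that any ball $B$ satisfying $f(B) \leq k/n - (C_\delta/n)\sqrt{k d \log n}$ has empirical mass strictly less than $k/n$. To apply this with $B = B(x,r_o)$, it suffices to check
\[
\frac{k}{2n} \;\leq\; \frac{k}{n} - \frac{C_\delta}{n}\sqrt{k d \log n},
\]
which rearranges to $C_\delta \sqrt{k d \log n} \leq k/2$, i.e.\ $k \geq 4 C_\delta^2 d \log n$. This is exactly the hypothesis of the lemma, so the inequality holds and therefore $|X_n \cap B(x,r_o)| < k$.

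Finally, since $B(x,r_o)$ contains strictly fewer than $k$ data points, the minimum radius $r_k(x)$ at which the closed ball around $x$ contains $k$ sample points must exceed $r_o$, establishing the claim. There is no real obstacle here — the proof is a one-line application of uniform convergence — the only step worth noting is that the threshold $k \geq 4C_\delta^2 d \log n$ in the hypothesis was chosen precisely to make the deviation term $C_\delta \sqrt{kd\log n}/n$ absorb the factor-of-two gap between $\Lambda v_d r_o^d = k/(2n)$ and the target $k/n$.
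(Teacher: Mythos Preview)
Your proof is correct and is essentially identical to the paper's own argument: bound $f(B(x,r_o)) \le v_d r_o^d \Lambda = k/(2n)$, use the hypothesis $k \ge 4C_\delta^2 d\log n$ to get $k/(2n) \le k/n - (C_\delta/n)\sqrt{kd\log n}$, and then apply the third implication of Lemma~\ref{lemma:convergence-balls} to conclude $f_n(B(x,r_o)) < k/n$, hence $r_k(x) > r_o$.
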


\begin{proof}
Pick any $x$ and consider the ball $B(x, r_o)$. By definition of $r_o$,
$$
 f(B(x,r_o))
\ \leq \
v_d r_o^d \Lambda
\ = \
\frac{k}{2n}
\ \leq \
\frac{k}{n} - \frac{C_{\delta}}{n} \sqrt{k d \log n}
$$
where the last inequality is from the condition on $k$.
Under $E_o$ (Lemma~\ref{lemma:convergence-balls}), we then get
$f_n(B(x,r_o)) < k/n$; therefore $r_k(x) > r_o$.
\end{proof}

Now we present an analogue of Theorem~\ref{thm:connectedness}.
\begin{thm}
Assume $E_o$. Let $A$ be a connected set in $\X$, with $\lambda = \inf_{x \in A_\sigma} f(x)$.
Suppose $\alpha \geq \sqrt{2}$.
Then $A \cap X_n$ is connected in $G^{\nn}_r$ whenever
$r + r_o \leq \sigma$ and
$$ v_d r^d \lambda
\ \ \geq \ \
\frac{k}{n} + \frac{C_\delta}{n} \sqrt{kd \log n}
$$
and
$$ k \ \ \geq \ \
\max\left\{\frac{\Lambda}{\lambda} \cdot 8 C_\delta d \log n, 4 C_\delta^2 d \log n \right\}.$$
\label{thm:connectedness2}
\end{thm}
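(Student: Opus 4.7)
The plan is to imitate the proof of Theorem~\ref{thm:connectedness} and construct, for any two data points $x, x' \in A \cap X_n$, a chain $x_0 = x, x_1, \ldots, x_m = x'$ of data points in $A_{r_o}$ whose consecutive pairs form edges of $G^{\nn}_r$. The key new ingredient is the uniform lower bound $r_k(\cdot) > r_o$ supplied by Lemma~\ref{lem:concentration2}: it lets me certify the mutual $k$-NN edge condition $\|x_i - x_{i+1}\| \leq \alpha \min(r_k(x_i), r_k(x_{i+1}))$ as long as each step of the chain is short enough that $\|x_i - x_{i+1}\| \leq \alpha r_o$.

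Concretely, I would rerun the iterative half-ball construction from the proof of Theorem~\ref{thm:connectedness}, but with half-ball radius $r_o$ in place of $r$ (and a correspondingly shortened stopping criterion $\|x_i - x'\| \leq \alpha r_o$). Termination via the continuity of $P$ and the geometric bound $\|x_i - x_{i+1}\|^2 \leq 2 r_o^2$ carry over verbatim from that argument, and the assumption $\alpha \geq \sqrt{2}$ then gives $\|x_i - x_{i+1}\| \leq \alpha r_o \leq \alpha \min(r_k(x_i), r_k(x_{i+1}))$, so each candidate edge is indeed present in the mutual $k$-NN graph.

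Two bookkeeping items remain. First, every $x_i$ must be a vertex of $G^{\nn}_r$: the hypothesis $r + r_o \leq \sigma$ ensures $x_i \in A_{r_o} \subseteq A_{\sigma-r}$, so Lemma~\ref{lemma:active-points} together with the assumption $v_d r^d \lambda \geq k/n + (C_\delta/n)\sqrt{kd\log n}$ yields $r_k(x_i) \leq r$. Second, each half-ball in the construction must actually contain a sample point; since it is centered at a point of $A$ and has radius $r_o \leq \sigma$, the half-ball lies in $A_\sigma$ and its $f$-mass is at least $\tfrac{1}{2} v_d r_o^d \lambda = k\lambda/(4n\Lambda)$, so the half-ball clause of Lemma~\ref{lemma:convergence-balls} fires as soon as $k \gtrsim C_\delta d \log n \cdot \Lambda/\lambda$, which is exactly the extra assumption the theorem places on $k$.

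The delicate point, and the reason the factor $\Lambda/\lambda$ appears where it did not in Theorem~\ref{thm:connectedness}, is this last mass calculation: the half-ball radius is forced to be the \emph{global} quantity $r_o$ (that is the only universal lower bound available on $r_k$ across all data points), while the mass inside the half-ball can only be estimated using the \emph{local} density lower bound $\lambda$. This forced mismatch between $\Lambda$ and $\lambda$ is what drives the stronger lower bound on $k$, and the lower-bound example referenced just after the theorem statement indicates that some such dependence is intrinsic to the mutual $k$-NN estimator.
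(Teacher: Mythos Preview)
Your proposal is correct and follows essentially the same route as the paper: run the half-ball construction of Theorem~\ref{thm:connectedness} at the small scale $r_o$ (using the lower bound on $k$ to guarantee each half-ball in $A_\sigma$ is nonempty), then verify that the resulting chain consists of vertices of $G^{\nn}_r$ via Lemma~\ref{lemma:active-points} at the larger scale $r$, and certify the mutual $k$-NN edges via Lemma~\ref{lem:concentration2}. The paper's write-up is just a slightly terser version of exactly this argument.
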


\begin{proof}
We'll consider events at two different scales: a small radius $r_o$, and the
potentially larger radius $r$ from the theorem statement.

Let's start with the small scale. The lower bound on $k$ yields
$$ v_d r_o^d \lambda
\ = \
\frac{k}{2n} \cdot \frac{\lambda}{\Lambda}
\ \geq \ \frac{4 C_\delta d \log n}{n} .$$
As in Theorem~\ref{thm:connectedness}, this implies that every half-ball of
radius $r_o$ within $A_\sigma$ contains at least one data point.

Let $x$ and $x'$ be any two points in $A \cap X_n$. As in
Theorem~\ref{thm:connectedness}, we can find a finite sequence of data points
$x = x_0, x_1, \ldots, x_p = x'$ such that for each $i$, two key conditions hold:
({\rm i}) $\|x_i - x_{i+1}\| \leq \alpha r_o$ and
({\rm ii}) $x_i$ lies within distance $r_o$ of $A$.
% \begin{itemize}
% \item $\|x_i - x_{i+1}\| \leq \alpha r_o$ and
% \item $x_i$ lies within distance $r_o$ of $A$.
% \end{itemize}

Now let's move to a different scale $r \leq \sigma - r_o$. Since each $x_i$
lies in $A_{r_o} \subseteq A_{\sigma-r}$, we know from
Lemma~\ref{lemma:active-points} that all $x_i$ are active in $G^{\nn}_r$ given
the lower bound on $v_d r^d \lambda$.
The edges $(x_i, x_{i+1})$ are also present, because
$$ \|x_i - x_{i+1}\| \leq \alpha r_o \leq \alpha \min(r_k(x_i), r_k(x_{i+1}))$$
using Lemma~\ref{lem:concentration2} and the bound on $k$.
Hence $x$ is connected to $x'$ in $G^{\nn}_r$.
\end{proof}
It is straightforward to check that $r(\lambda)$ is always $\geq r_o$, and
Theorem~\ref{thm:upper-bound2} follows immediately.

\subsection{A lower bound on neighborhood cardinality}

The result for $k$-nearest neighbor graphs requires a larger setting of $k$ than
our earlier result; in particular, $k$ needs to exceed the ratio $\Lambda/\lambda$.
We now show that this isn't just a looseness in our bound, but in fact a necessary
condition for these types of graphs.

Recall that the mutual $k$-NN graph contains all the data points, and puts an
edge between points $x$ and $x'$ if
$ \|x - x' \| \leq \alpha \min(r_k(x), r_k(x')) $
(the $\alpha$ is our adaptation). We will assume $1 \leq \alpha \leq 2$, as
is the case in all our upper bounds.

\begin{lemma}
Pick any $\lambda > 0$, any $\Lambda > 32 \lambda$, and any
$k \leq \Lambda/(64 \lambda)$. Then there is a density $f$ on $\X \subset \R$
with $\lambda \leq f(x) \leq \Lambda$ for all $x \in \X$, and with the following
property: for large enough $n$, when $n$ samples are drawn
i.i.d.\ from $f$, the resulting mutual $k$-NN graph (with $1 \leq \alpha \leq 2$)
is disconnected with probability at least $1/2$.
\label{lem:lowerboundexample}
\end{lemma}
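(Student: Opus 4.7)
The plan is to construct an explicit density on $\X \subset \R$ consisting of two widely separated intervals: a short interval $I_1 = [0, L_1]$ of length $L_1 = 4k/(n\Lambda)$ carrying density $\Lambda$, and a long interval $I_2 = [L_1 + g, L_1 + g + L_2]$ of length $L_2 = (1 - 4k/n)/\lambda$ carrying density $\lambda$, separated by a gap of width $g = 4L_1$. Setting $\X = I_1 \cup I_2$ with $f \equiv \Lambda$ on $I_1$ and $f \equiv \lambda$ on $I_2$ gives $\lambda \leq f \leq \Lambda$ on $\X$ and $\int f = L_1\Lambda + L_2\lambda = 1$ for every $n \geq 4k$; this is where taking $n$ large (and the standing constraint $k \leq \Lambda/(64\lambda)$ keeping $k$ bounded) matters.

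The main structural claim is that on the event $E = \{|I_1 \cap X_n| \geq k\} \cap \{I_2 \cap X_n \neq \emptyset\}$ the mutual $k$-NN graph is disconnected. To see this, pick any $x \in I_1 \cap X_n$. Since $[x - L_1, x + L_1] \supseteq [0, L_1]$ whenever $x \in I_1$, and $g > L_1$ implies $x + L_1 < L_1 + g$, the ball $B(x, L_1)$ contains all of $I_1 \cap X_n$ (at least $k$ points) while missing $I_2$; hence $r_k(x) \leq L_1$. Consequently, for any $y \in I_2 \cap X_n$ and any $1 \leq \alpha \leq 2$,
\[
\|x - y\| \; \geq \; g \; = \; 4 L_1 \; > \; 2 L_1 \; \geq \; \alpha L_1 \; \geq \; \alpha\, r_k(x) \; \geq \; \alpha \min\bigl(r_k(x),\, r_k(y)\bigr),
\]
so the mutual $k$-NN condition fails; the $I_1$-samples and $I_2$-samples therefore occupy disjoint components.

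To bound $\P(E)$ from below, I would apply Chebyshev to $M := |I_1 \cap X_n|$, which is $\mathrm{Binomial}(n, 4k/n)$ with mean $4k$ and variance at most $4k$: this gives $\P(M < k) \leq \mathrm{Var}(M)/(3k)^2 \leq 4/(9k) \leq 4/9$. Meanwhile $\P(I_2 \cap X_n = \emptyset) = (4k/n)^n \to 0$ as $n \to \infty$. A union bound yields $\P(E) \geq 5/9 - o(1) > 1/2$ for $n$ sufficiently large.

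The main obstacle is the tightness of the probability estimate at very small $k$; if additional slack is desired one can enlarge the expected count in $I_1$ (e.g., take $L_1 \Lambda = 8k/n$) or substitute a sharper Poisson-type tail bound for $M$. The degenerate case $k = 1$ requires no construction at all since $r_1(x) = 0$ makes the mutual $k$-NN graph edgeless.
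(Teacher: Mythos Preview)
Your argument has a formal gap and a more substantive issue of approach.

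The formal gap: your interval lengths $L_1 = 4k/(n\Lambda)$ and $L_2 = (1-4k/n)/\lambda$ depend on $n$, so the density $f$ you build varies with the sample size. But the lemma asks for a \emph{single} density $f$ such that the conclusion holds for all sufficiently large $n$; as written, you have not produced such an $f$. This is easy to repair---fix $L_1$ to any constant with $0 < L_1 \Lambda < 1$, take $g = 3L_1$, and set $L_2 = (1 - L_1\Lambda)/\lambda$. For large $n$ the count in $I_1$ then exceeds $k$ with probability tending to $1$, and your structural argument goes through unchanged.

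The deeper problem is that your support $\X = I_1 \cup I_2$ is disconnected, and once the construction is fixed as above your argument works for \emph{every} value of $k$: the hypothesis $k \leq \Lambda/(64\lambda)$ is never invoked. On a disconnected support the mutual $k$-NN graph \emph{ought} to be disconnected---that is correct behavior of the estimator, not a failure. The entire point of the lemma, and the reason the bound on $k$ appears in it at all, is to exhibit a \emph{connected} support on which the $k$-NN graph nonetheless fragments; that is what shows the requirement $k \gtrsim \Lambda/\lambda$ in Theorem~\ref{thm:upper-bound2} is genuinely necessary rather than an artifact of the analysis. The paper's construction accordingly takes a single connected interval partitioned into regions $A, B, C$ of densities $\Lambda, \lambda, \Lambda$, and uses the hypothesis $k \leq \Lambda/(64\lambda)$ to show that, with good probability, a dyadic family of sub-intervals of $A$ anchored at the $A$--$B$ boundary each contain more than $k$ sample points while a short strip at the left end of $B$ is empty; together these force every sample in $A$ to have its $k$ nearest neighbors at less than half its distance to the nearest sample in $B \cup C$, severing all mutual-$k$-NN edges out of $A$. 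That dyadic counting step is exactly where the constraint on $k$ does real work, and it is precisely what your disconnected-support shortcut bypasses.
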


\begin{figure}
\begin{center}
\resizebox{3in}{!}{\input{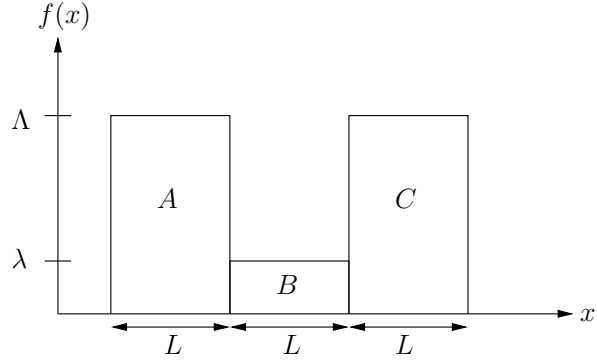}}
\end{center}
\caption{A density that illustrates why $k$-NN graphs require
$k \geq \Lambda/\lambda$  for connectivity.}
\label{fig:lbnd}
\end{figure}

\begin{proof}
Consider the density shown in Figure~\ref{fig:lbnd}, consisting of two dense
regions, $A$ and $C$, bridged by a less dense region $B$. Each region is of width
$L = 1/(\lambda + 2\Lambda)$. We'll show that the mutual $k$-NN graph of a sample
from this distribution is likely to be disconnected. Specifically, with
probability at least $1/2$, there will be no edges between $A$ and $B \cup C$.

To this end, fix any $n \geq \Lambda/\lambda$, and define
$\Delta = 1/(4n\lambda) < L$. Consider the leftmost portion of $B$ of
length $\Delta$. The probability that a random draw from $f$ falls in this
region is $\Delta \lambda = 1/(4n)$. Therefore, the probability that no
point falls in this region is $(1 - 1/(4n))^n \geq 3/4$. Call this event $E_1$.

Next, divide $A$ into intervals of length $\Delta, 2\Delta, 4 \Delta$, and so
on, starting from the right. We'll show that with probability at least $3/4$,
the right half of each such interval contains at least $k+1$ points; call this
event $E_2$. To see why, let's focus on one particular interval, say that
of length $2^i \Delta$.
The probability that a random point falls in the right half of this interval is
$2^{i-1} \Delta \Lambda \geq 2^{i+3} k/n$. Therefore, the number of points in this
region is $\geq 2^{i+3} k$ in expectation, and by a Chernoff bound, is $\geq k+1$
except with probability $< \exp(-2^{i+1}k)$. Taking a union bound over all
the intervals yields an overall failure probability of at most $1/4$.

With probability at least $1/2$, events $E_1$ and $E_2$ both occur. Whereupon,
for any point in $A$, its nearest neighbor in $B \cup C$ is at least twice as
far as its $k$ nearest neighbors in $A$. Thus the mutual $k$-NN graph has no
edges between $A$ and $B \cup C$.
\end{proof}
This constraint on $k$ is unpleasant, and it would be interesting to
either find mild smoothness assumptions on $f$, or better, modified
notions of $k$-NN graph, that render it unnecessary.

\section{Lower bound}

We have shown that the two cluster tree algorithms distinguish pairs
of clusters that are $(\sigma,\epsilon)$-separated. The number of samples
required to capture clusters at density $\geq \lambda$ is, by
Theorem~\ref{thm:upper-bound},
$$
O \left( \frac{d}{v_d (\sigma/2)^d \lambda \epsilon^2}
\log \frac{d}{v_d (\sigma/2)^d \lambda \epsilon^2} \right) ,
$$
We'll now show that this dependence on $\sigma$, $\lambda$, and $\epsilon$
is optimal. The only room for improvement, therefore, is in constants
involving $d$.

\begin{thm}
Pick any $0 < \epsilon < 1/2$, any $d > 1$, and any $\sigma, \lambda > 0$
such that $\lambda v_{d-1} \sigma^d < 1/120$. Then there exist: an input space
$\X \subset \R^d$; a finite family of densities $F = \{f_i\}$ on $\X$;
subsets $A_i, A_i', S_i \subset \X$ such that $A_i$ and $A_i'$ are
$(\sigma,\epsilon)$-separated by $S_i$ for density $f_i$, and
$\inf_{x \in A_{i,\sigma} \cup A'_{i,\sigma}} f_i(x) \geq \lambda$,
with the following additional property.

Consider any algorithm that is given $n \geq 100$ i.i.d.\ samples $X_n$ from
some $f_i \in F$ and, with probability at least $3/4$, outputs a
tree in which the smallest cluster containing $A_i \cap X_n$ is disjoint from
the smallest cluster containing $A_i' \cap X_n$. Then
$$ n \ \geq \
\frac{C_2}{v_d \sigma^d \lambda \epsilon^2 d^{1/2}} \log \frac{1}{v_d \sigma^d \lambda d^{1/2}} $$
for some absolute constant $C_2$.
\label{thm:lower-bound}
\end{thm}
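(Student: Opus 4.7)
The natural approach is a Fano-type multiway hypothesis testing lower bound. I would construct a finite family $F = \{f_1, \ldots, f_M\}$ of densities on a long, thin tube $\X \subset \R^d$---concretely, the product of an interval $[0,L]$ with a $(d-1)$-dimensional Euclidean ball of radius slightly exceeding $\sigma$---so that the densities are pairwise close in KL divergence, while the ``correct'' cluster separations $(A_i, A_i')$ are mutually incompatible. Along the tube I would place $M$ candidate pinch slabs at axial positions $x_1 < \cdots < x_M$ spaced $\asymp \sigma$ apart; each $f_i$ equals a constant base value $\mu \approx \lambda$ everywhere except on a slab of axial width $\Theta(\sigma)$ around $x_i$, where its value is pulled down to $(1-\epsilon)\mu$. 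I then take $A_i, A_i'$ to be the two portions of the tube strictly to the left and strictly to the right of slab $i$, so that slab $i$ is a valid $(\sigma,\epsilon)$-separator for $f_i$ and $\inf_{A_{i,\sigma} \cup A'_{i,\sigma}} f_i \geq \lambda$. Normalizing the total mass to $1$ forces $L \lesssim 1/(\lambda v_{d-1} \sigma^{d-1})$, hence $M \asymp L/\sigma \asymp 1/(\lambda v_{d-1} \sigma^d)$; the hypothesis $\lambda v_{d-1}\sigma^d < 1/120$ is precisely what lets $M$ be taken large enough for Fano to bite.

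The key combinatorial step, and the one I expect to be the main obstacle, is to show that a single hierarchical clustering of $X_n$ cannot succeed for two distinct hypotheses $f_i, f_j$ simultaneously. Assuming $i < j$, such a tree would contain clusters $C_i \supseteq A_i \cap X_n$, $C_i' \supseteq A_i' \cap X_n$, $C_j \supseteq A_j \cap X_n$, $C_j' \supseteq A_j' \cap X_n$ with $C_i \cap C_i' = \emptyset = C_j \cap C_j'$. Any sample point in the slice of the tube strictly between slabs $i$ and $j$ lies in $A_i' \cap A_j$, and walking through the handful of possible nestings of $\{C_i, C_j\}$ and $\{C_i', C_j'\}$ (two clusters in a hierarchy are always either nested or disjoint) forces any such point into a contradiction. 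Provided each consecutive middle slice contains at least one data point---which holds with high probability as soon as $n \gtrsim M \log M$, a regime implied by the target bound---no tree can certify more than one hypothesis, so the correct index can be read off from the algorithm's output.

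This incompatibility lets me define a test $\hat\imath(X_n)$ that runs the algorithm and outputs the unique index (if any) whose $A_i, A_i'$ are separated by the output tree. The $3/4$-success hypothesis yields $\P_{f_i}(\hat\imath = i) \geq 3/4 - o(1)$ uniformly in $i$. Fano's inequality applied to the uniform prior on $\{f_1, \ldots, f_M\}$ then gives $\log M \lesssim 1 + n \cdot \max_{i \neq j} \KL(f_i \| f_j)$, and since $f_i$ and $f_j$ agree outside two disjoint slabs of $d$-volume $\Theta(v_{d-1} \sigma^d)$ and differ there by a factor of at most $1-\epsilon$, the elementary inequality $\log(1-\epsilon) \geq -\epsilon - \epsilon^2$ (valid for $\epsilon < 1/2$) gives $\KL(f_i \| f_j) \lesssim \epsilon^2 \lambda v_{d-1} \sigma^d$. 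Combining produces $n \gtrsim \log M / (\epsilon^2 \lambda v_{d-1} \sigma^d)$, and the final cosmetic step is the identity $v_{d-1}/v_d = \Gamma(d/2+1)/(\sqrt{\pi}\,\Gamma((d+1)/2)) \asymp \sqrt{d}$ (by Stirling), which converts the bound into the target form with $v_d \sigma^d \lambda \epsilon^2 d^{1/2}$ in the denominator and $\log(1/(v_d \sigma^d \lambda d^{1/2}))$ in the numerator.
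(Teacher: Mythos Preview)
Your approach is essentially the paper's: a Fano argument on a family of tube densities with a movable low-density slab, followed by the $v_{d-1}\asymp v_d\sqrt{d}$ conversion. Your tree-nesting incompatibility argument (using $C_i\subseteq C_j$, $C_j'\subseteq C_i'$ when $i<j$, and a sample in $A_i'\cap A_j$ to force a contradiction) is a clean formalization of what the paper states more informally, and your KL bound matches theirs.

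There is, however, a genuine circularity in your identification step. You need every consecutive inter-slab region (and the two end regions $A_1$, $A_M'$) to contain a sample; otherwise some $j\neq i$ is trivially ``certified'' (if $A_j\cap X_n=\emptyset$ the smallest containing cluster is empty and hence disjoint from anything), and $\hat\imath$ is ill-defined. In your construction each such region has mass only $\Theta(v_{d-1}\sigma^d\lambda)=\Theta(1/M)$, so the all-regions-nonempty event requires $n\gtrsim M\log M$. You note this regime is ``implied by the target bound,'' but the target bound is precisely what you are trying to prove from the success assumption; you cannot assume it. Concretely, your argument only shows: \emph{if} the algorithm succeeds \emph{and} $n\gtrsim M\log M$, \emph{then} $n\gtrsim M(\log M)/\epsilon^2$. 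It does not rule out a successful algorithm with $n\ll M\log M$.

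The paper breaks this circularity with two devices you are missing. First, it plants \emph{mass balls}---point masses of weight $1/(2c)$---at each candidate slab position (and dumps leftover mass on an auxiliary set $\X_1$), so that anchor points are hit with high probability even when $n$ is modest. Second, it runs a two-stage argument: a constant-size family ($c=6$) whose $\Theta(1)$-mass anchors are hit already at $n\geq 100$, yielding $n\gtrsim 1/(v_{d-1}\sigma^d\lambda\epsilon^2)$; then the full family with $c\asymp 1/(v_{d-1}\sigma^d\lambda)$, where the stage-one bound guarantees enough samples to hit the few anchors needed and deliver the extra $\log$ factor. Either adding concentrated anchors to your tube or explicitly running the two-stage argument would close the gap; as written, the proposal does not.
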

\begin{proof}
Given the parameters $d, \sigma, \epsilon, \lambda$,
we will construct a space $\X$ and a finite family of densities
$F = \{f_i\}$ on $\X$.  We will then argue that any cluster tree
algorithm that is able to distinguish $(\sigma,\epsilon)$-separated clusters
must be able, when given samples from some $f_i$, to determine the
identity of $I$. The sample complexity of this latter task can be
lower-bounded using Fano's inequality (Appendix~\ref{sec:fano}): it is
$\Omega((\log |F|)/\theta)$, for
$$\theta = \max_{i \neq j} K(f_i, f_j),$$
where $K(\cdot,\cdot)$ is Kullback-Leibler divergence.

{\it The support $\X$.} The support $\X$ is made up of two disjoint regions:
a cylinder $\X_0$, and an
additional region $\X_1$ which serves as a repository for excess
probability mass. $\X_1$ can be chosen as any Borel set disjoint from
$\X_0$. The main region of interest is $\X_0$ and is described as follows
in terms of a constant $c>1$ to be specified. Pick
$1<\tau< \min\braces{\sigma, \paren{2c\cdot v_d \lambda}^{-1/d}} + 1$, such that
$\tau^{d-1}\leq 2$.
Let $B_{d-1}$ be
the unit ball in $\R^{d-1}$, and let $\tau\sigma B_{d-1}$ be this same ball scaled to
have radius $\tau\sigma$. The cylinder $\X_0$ stretches along the $x_1$-axis; its
cross-section is $\tau\sigma B_{d-1}$ and its length is $4(c+1)\sigma$ for some
$c > 1$ to be specified: $\X_0 = [0,4(c+1)\sigma] \times \sigma B_{d-1}$.
Here is a picture of it:

\begin{center}
\resizebox{3.3in}{!}{\input{cylinder1.pstex_t}}
\end{center}

{\it A family of densities on $\X$.} The family $F$ contains $c-1$
densities $f_1, \ldots, f_{c-1}$ which coincide on most of the
support $\X$ and differ on parts of $\X_0$. Each density
$f_i$ is piecewise constant as described below.
Items ({\rm ii}) and ({\rm iv}) describe the pieces
that are common to all densities in $F$.

\begin{enumerate}[(i)]
\item Density $\lambda(1-\epsilon)$ on
$(4 \sigma i + \sigma, 4\sigma i + 3 \sigma) \times \tau\sigma B_{d-1}$.
\item Balls of mass $1/(2c)$ centered at locations
$4\sigma, 8\sigma,\ldots, 4c \sigma$ along the $x_1$-axis:
each such ball is of radius $\tau -1<\sigma$, and the density
on these balls is $1/(2c\cdot v_d (\tau -1)^d)\geq\lambda$.
We refer to these as \emph{mass balls}.

\item Density $\lambda$ on the remainder of $\X_0$: this is the union
of the cylinder segments
$[0,4 \sigma i + \sigma] \times \tau\sigma B_{d-1}$ and $[4\sigma i + 3 \sigma, 4(c+1) \sigma] \times \tau\sigma B_{d-1}$
minus the mass balls.
Since the cross-sectional area of the cylinder is $v_{d-1} (\tau\sigma)^{d-1}$, the total
mass here is at most $\lambda \tau^{d-1}v_{d-1} \sigma^d (4(c+1) - 2)$.

\item The remaining mass is at least $1/2 - 8\lambda v_{d-1} \sigma^d (c+1)$;
we will be careful to choose $c$ so that this is nonnegative. The remaining mass is placed
on $\X_1$ in some fixed manner that does not vary between densities in $F$.
\end{enumerate}

Here is a sketch of $f_i$. The low-density region of width $2\sigma$ is centered
at $4 \sigma i + 2\sigma$ on the $x_1$-axis, and contains no mass balls.

\begin{center}
\resizebox{3.3in}{!}{\input{cylinder2.pstex_t}}
\end{center}

For any $i \neq j$, the densities $f_i$ and $f_j$ differ only on the
cylindrical sections $(4 \sigma i + \sigma, 4\sigma i + 3 \sigma) \times \sigma B_{d-1}$
and $(4 \sigma j + \sigma, 4\sigma j + 3 \sigma) \times \sigma B_{d-1}$, which are
disjoint, contain no mass ball, and each have volume $2\tau^{d-1}v_{d-1} \sigma^d$. Thus
\begin{eqnarray*}
K(f_i, f_j)
& = &
2 \tau^{d-1}v_{d-1} \sigma^d \left( \lambda \log \frac{\lambda}{\lambda(1-\epsilon)} +
\lambda(1-\epsilon) \log \frac{\lambda(1-\epsilon)}{\lambda} \right)
\\
& = &
2 \tau^{d-1}v_{d-1} \sigma^d \lambda (-\epsilon \log (1-\epsilon))
\ \ \leq \ \
\frac{8}{\ln 2} v_{d-1} \sigma^d \lambda \epsilon^2
\end{eqnarray*}
(using $\ln(1-x) \geq -2x$ for $0 < x \leq 1/2$).
This is an upper bound on the $\theta$ in the Fano bound.

{\it Clusters and separators.}
Now define the clusters and separators as follows: for each $1 \leq i \leq c-1$,
\begin{itemize}
\item $A_i$ is the tubular segment $[\sigma, 4\sigma i]\times (\tau -1)\sigma$,
\item $A_i'$ is the tubular segment $[4 \sigma (i+1), 4(c+1)\sigma - \sigma]\times(\tau -1)\sigma$, and
\item $S_i = \{4\sigma i + 2\sigma\} \times \sigma B_{d-1}$ is the cross-section
of the cylinder at location $4 \sigma i + 2\sigma$.
\end{itemize}
Thus $A_i$ and $A_i'$ are $d$-dimensional sets while $S_i$ is a $(d-1)$-dimensional
set. It can be seen that, for density $f_i$, $A_i$ and $A_i'$ are $(\sigma,\epsilon)$-separated,
and $\inf_{x \in A_{i,\sigma} \cup A'_{i,\sigma}} f_i(x) \geq \lambda$.

Now that the various structures are defined, we still need to argue that if an
algorithm is given a sample $X_n$ from some $f_i$ (where $i$ is unknown),
and is able to separate $A_i \cap X_n$ from $A'_i \cap X_n$, then it can
effectively infer the identity of $i$. This has sample complexity
$\Omega((\log c)/\theta)$.

Let's set $c$ to be a small constant, say $c = 6$. Then, even a small sample
$X_n$ of $n \geq 100$ points is likely (with probability at least $3/4$, say),
to contain points from all of the $c$ mass balls, each of which has
mass $1/(2c)$. Suppose the algorithm even knows in advance that the underlying
density is one of the $c-1$ choices in $F$, and is subsequently able (with
probability at least $3/4$) to separate $A_i$ from $A_i'$. To do this, it must
connect all the points from mass balls within $A_i$, and all the points from mass balls within $A_i'$,
and yet keep these two groups apart. In short, this algorithm must be able to
determine (with overall probability at least $1/2$) the segment
$(4 \sigma i + \sigma, 4\sigma i + 3 \sigma)$ of lower density, and hence
the identity of $i$.

We can thus apply Fano's inequality to conclude that we need
$$ n
\ > \
\frac{\frac{1}{2} \log (c-1) - 1 }{\theta}
\ \geq \
\frac{(\frac{1}{2} \log 5  - 1) \ln 2}{8 v_{d-1} \sigma^d \lambda \epsilon^2}
\ \geq \
\frac{C_2}{v_d \sigma^d \lambda \epsilon^2 d^{1/2}}
$$
for some absolute constant $C_2$. The last equality comes from the formula
$v_d = \pi^{d/2}/\Gamma((d/2) + 1)$, whereupon $v_{d-1} = O(v_d d^{1/2})$.

This is almost the bound in the theorem statement, short  a logarithmic term.
To finish up, we now switch to a larger value of $c$:
$$ c \ = \ \left\lfloor \frac{1}{16 v_{d-1} \sigma^d \lambda}  - 1 \right\rfloor,$$
and apply the same construction. We have already established that we need
$n = \Omega(c/\epsilon^2)$ samples, so assume $n$ is at least this large. Then,
for small enough $\epsilon$, it is very likely that when the underlying density
is $f_i$, the sample $X_n$ will contain the four point masses at
$4\sigma$, $4\sigma i$, $4 \sigma (i+1)$, and $4 (c+1) \sigma$. Therefore, the
clustering algorithm must connect the point at $4 \sigma$ to that at $4 \sigma i$
and the point at $4 \sigma (i+1)$ to that at $4 (c+1) \sigma$, while keeping the
two groups apart. Therefore, this algorithm can determine $i$. Applying Fano's
inequality gives $n = \Omega((\log c)/\theta)$, which is the bound in the theorem
statement.
\end{proof}

\section{Pruning}

\begin{figure}
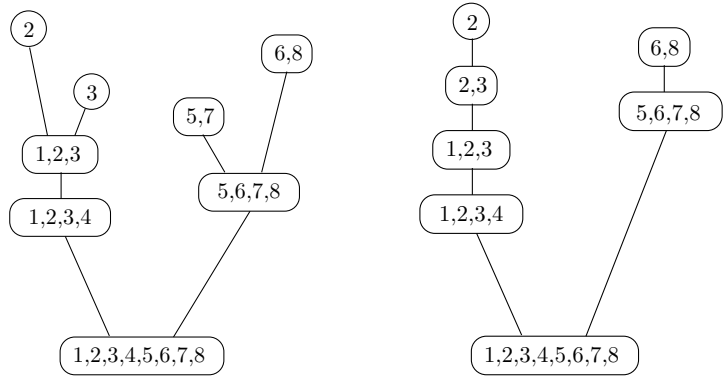

\begin{center}
\resizebox{1.6in}{!}{\input{pruning1.pstex_t}}
\hskip.5in
\resizebox{1.6in}{!}{\input{pruning2.pstex_t}}
\end{center}
\caption{A sample from the density of Figure~\ref{fig:lbnd} contains four
points ($1,2,3,4$) from cluster $A$ and four ($5,6,7,8$) from cluster $C$.
{\it Left:} A cluster tree that correctly distinguishes $A$ from $C$.
{\it Right:} A better alternative that avoids fragmenting $A$ and $C$.}
\label{fig:pruning}
\end{figure}

Hartigan's notion of consistency (Definition~\ref{defn:consistency})
requires distinct clusters to be distinguished, but does not guard against
fragmentation within a cluster. Consider, for instance, the density shown
in Figure~\ref{fig:lbnd}. Under Hartigan-consistency, in the limit, the cluster
tree must include a cluster that contains all of $A$ and a separate, disjoint
cluster that contains all of $C$. But the tree is allowed to break $A$ into
further subregions. To be concrete, suppose we draw a sample from that density
and receive four points from each of $A$ and $C$. Figure~\ref{fig:pruning},
left, shows a possible cluster tree on these samples that meets the consistency
requirement. However, we'd prefer the one on the right. Formally we want
to avoid or remove \emph{false clusters} as defined below.

\begin{defn}
 Let $A_n$ and $A_n'$ be the vertices of two separate connected components
 (potentially at different levels) in the cluster tree returned by an algorithm. We call
 $A_n$ and $A_n'$ \emph{false clusters} if they are part of the same connected
 component of the level set $\braces{x: f(x)\geq \min_{x'\in A_n \cup A_n'} f(x')}$.
\end{defn}

This problem is generally addressed in the literature by making assumptions
about the \emph{size} of true clusters. Real clusters are assumed to be large
in some sense, for instance in terms of their mass~\citep{MHL09}, or
\emph{excess mass}\footnote{The excess mass of a component $A$ at level
$\lambda$ is generally defined as $\int_A (f(x) -\lambda) \, d x$.} \citep{SN09}.
However, relying on size can be misleading in practice, as is illustrated in
Figure~\ref{fig:falseClusters}. It turns out that, building on the results
of the previous sections, there is a simple way to treat spurious clusters
independent of their size.

\begin{figure}
\begin{center}
\resizebox{6cm}{!}{\input{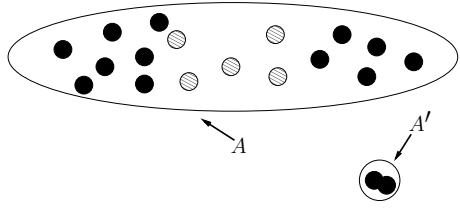}}
\end{center}
\caption{Depicted are samples from two connected components $A$ and $A'$ of some level set.
Suppose that only the black samples appear at level $r$ in the empirical tree: the grey
samples have $r_k(x)>r$. Then in $G_r$, samples from $A$ appear as two large clusters. If we
were to just go by size, we would be tempted to preserve this spurious partition and possibly
to remove the smaller cluster of samples from $A'$. Our pruning method, however, will connect
the two groups from $A$ and maintain the cluster from $A'$.}
\label{fig:falseClusters}
\end{figure}

\subsection{Intuition}

The pruning procedure of Figure~\ref{fig:alg_pruning} consists of a simple lookup:
it reconnects components at level $r$ if they are part of the same connected component at
some level $r'>r$, where $r'$ is a function of a tuning parameter $\tilde{\epsilon}\geq 0$.
The larger $\tilde{\epsilon}$ is, the more aggressive the pruning.

\begin{figure}
\framebox[6.5in]{
\begin{minipage}[t]{6in}
\noindent
{\bf Pruning of level $r$.}\\
\begin{itemize}
\item Set
$$\tilde{\lambda}_r = \frac{1}{v_d r^d}\paren{\frac{k}{n} - \frac{C_\delta}{n}\sqrt{kd\log n}} - \tilde{\epsilon}.$$
\item Connect any two components of $\C_n(r)$ that belong to the same connected component in $\C_n(r(\max(\tilde{\lambda}_r,0)))$.
\end{itemize}
\end{minipage}
}
\caption{An algorithm for pruning $G_r$ or $G^\nn_r$, applied for every $r$.
It assumes a tuning parameter $\tilde{\epsilon}>0$. Recall from Definition~\ref{defn:rlambda}
that for any $\lambda>0$, we take $r(\lambda)$ to be the value of $r$ for which
$v_dr^d {\lambda} = \frac{k}{n} + \frac{C_\delta}{n}\sqrt{kd\log n}$.}
\label{fig:alg_pruning}
\end{figure}

The pruning procedure builds upon the same intuition as for the procedure of \cite{KV11},
however it differs in its ability to handle either cluster-tree algorithms,
and works under significantly milder conditions than that of
\cite{KV11}. The intuition is the following. Suppose $A_n, A'_n\subset X_n$ are not
connected at some level $r$ in the empirical tree (before pruning), but ought to be: they
belong to the same connected component $A$ of $\C(\lambda)$, where
$\lambda = \min\braces{f(x): x\in A_n \cup A_n'}$. Then, key sample points from $A$ that would
have connected them are missing at level $r$ in the empirical tree
(Figure \ref{fig:falseClusters}).
These points have $r_k(x)$ greater than $r$, but probably not much greater.
Looking at a nearby level $r'>r$, we will find $A_n, A_n'$ connected and thus detect the situation.

The above intuition is likely to extend to cluster tree procedures other than the ones
discussed here. The main requirement on the cluster tree estimate is that points in $A$ (as
discussed above) be connected at some nearby level in the tree.

\subsection{Separation}

The pruning procedure increases connectivity, but we must make sure that it isn't too
zealous in doing so: clusters that are sufficiently separated should not be merged.
We now will require a bit more separation between two sets $A$ and $A'$
in order to keep them apart in the empirical tree. As might be expected, how much more
separation depends on the pruning parameter $\tilde{\epsilon}$.
The higher $\tilde{\epsilon}$, the more aggressive the pruning, and the greater the
separation requirement for detecting distinct clusters. The following lemma builds
on Corollary~\ref{cor:separation}.

\begin{lemma}
 Assume $E_o$. Consider two sets $A, A' \subset \X$, and let
$\lambda = \inf_{x \in A_\sigma \cup A'_\sigma} f(x)$.  Suppose there exists a separator set $S$ such that
\begin{itemize}
\item Any path in $\X$ from $A$ to $A'$ intersects $S$.
\item $\sup_{x \in S_\sigma} f(x) < (1-2\epsilon)\lambda - \tilde{\epsilon}$.
\end{itemize}
 Then $A\cap X_n$ and $A'\cap X_n$ are in separate connected components
of $\C_n(r(\lambda))$ after pruning, provided $k \geq 4C_\delta^2 (d/\epsilon^2) \log n$ and
$$ v_d\paren{2\sigma/(\alpha + 2)}^d\paren{(1-\epsilon)\lambda - \tilde{\epsilon}} > \frac{k}{n} + \frac{C_\delta}{n}\sqrt{kd\log n}.$$
\end{lemma}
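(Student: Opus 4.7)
The plan is to verify two things: (i) before pruning, $A \cap X_n$ and $A' \cap X_n$ already lie in separate components of $G_{r(\lambda)}$, and (ii) the pruning lookup at level $r(\max(\tilde{\lambda}_{r(\lambda)},0))$ does not merge them, because they remain separate at that lookup level in the unpruned graph. Once both are established, the pruning rule will refuse to reconnect $A_n$ and $A_n'$ at level $r(\lambda)$.

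For step (i), the hypothesis $\sup_{S_\sigma} f < (1-2\epsilon)\lambda - \tilde\epsilon \leq (1-\epsilon)\lambda$ shows that $A$ and $A'$ are $(\sigma,\epsilon)$-separated by $S$ in the original sense. The quantitative hypothesis of the lemma gives $v_d r(\lambda)^d \lambda = \tfrac{k}{n}+\tfrac{C_\delta}{n}\sqrt{kd\log n} < v_d\bigl(\tfrac{2\sigma}{\alpha+2}\bigr)^d\!\bigl((1-\epsilon)\lambda-\tilde\epsilon\bigr) < v_d\bigl(\tfrac{2\sigma}{\alpha+2}\bigr)^d\lambda$, so $r(\lambda) < 2\sigma/(\alpha+2)$. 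Corollary~\ref{cor:separation} then yields the desired separation in $G_{r(\lambda)}$.

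For step (ii), write $r=r(\lambda)$ and $\tilde\lambda=\tilde{\lambda}_{r(\lambda)}$. Using the defining identity $v_dr^d\lambda = \tfrac{k}{n}+\tfrac{C_\delta}{n}\sqrt{kd\log n}$, one expresses
\[
\tilde\lambda \;=\; \lambda\cdot\frac{k/n - (C_\delta/n)\sqrt{kd\log n}}{k/n + (C_\delta/n)\sqrt{kd\log n}} - \tilde\epsilon.
\]
The condition $k \geq 4C_\delta^2(d/\epsilon^2)\log n$ gives $(C_\delta/n)\sqrt{kd\log n} \leq (\epsilon/2)(k/n)$, and the elementary inequality $(1-x)/(1+x)\geq 1-2x$ for $x\in[0,1]$ then yields $\tilde\lambda \geq (1-\epsilon)\lambda - \tilde\epsilon$, which is strictly positive by the quantitative hypothesis; hence the lookup is at the legitimate level $r(\tilde\lambda)$. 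I would next check that $A,A'$ are $(\sigma,\epsilon)$-separated by the \emph{same} $S$ at level $\tilde\lambda$: clearly $\inf_{A_\sigma\cup A'_\sigma}f \geq \lambda \geq \tilde\lambda$, and a short algebraic check using $\tilde\lambda \geq (1-\epsilon)\lambda-\tilde\epsilon$ gives
\[
\sup_{S_\sigma} f \;<\; (1-2\epsilon)\lambda - \tilde\epsilon \;\leq\; (1-\epsilon)\tilde\lambda.
\]
Finally, since $\tilde\lambda \geq (1-\epsilon)\lambda - \tilde\epsilon$, the quantitative hypothesis also gives $v_d r(\tilde\lambda)^d\tilde\lambda = \tfrac{k}{n}+\tfrac{C_\delta}{n}\sqrt{kd\log n} < v_d\bigl(\tfrac{2\sigma}{\alpha+2}\bigr)^d\tilde\lambda$, so $r(\tilde\lambda)<2\sigma/(\alpha+2)$. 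Corollary~\ref{cor:separation} applied at level $\tilde\lambda$ then separates $A\cap X_n$ from $A'\cap X_n$ in $G_{r(\tilde\lambda)}$, completing step (ii).

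The main technical point is the bookkeeping in step (ii): tracing how the concentration slack hidden in $\tilde\lambda_{r(\lambda)}$ is paid for by one factor of $(1-\epsilon)$ (the role of the hypothesis on $k$), while the factor $(1-2\epsilon)$ in the separator hypothesis is precisely what allows the same separator $S$ to continue witnessing $(\sigma,\epsilon)$-separation at the strictly smaller density level $\tilde\lambda$ used by the lookup. Everything else is a direct invocation of the separation machinery from Section~\ref{sec:separation}.
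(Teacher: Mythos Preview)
The proposal is correct and follows essentially the same route as the paper. Two minor remarks. First, your step~(i) is redundant: since $\tilde\lambda\leq\lambda$ we have $r(\tilde\lambda)\geq r(\lambda)$, and the graphs $G_r$ are monotone in $r$, so separation at the lookup level $r(\tilde\lambda)$ already forces separation at $r(\lambda)$ before pruning; the paper skips directly to the lookup level. Second, your final appeal to ``Corollary~\ref{cor:separation} applied at level $\tilde\lambda$'' is slightly imprecise, because that corollary is stated only for the specific value $\lambda=\inf_{A_\sigma\cup A'_\sigma}f$. What you are really invoking is the \emph{argument} of Lemma~\ref{lemma:separation}(b)--(c) with $\tilde\lambda$ playing the role of $\lambda$; the paper does exactly the same, working with the lower bound $\lambda'=(1-\epsilon)\lambda-\tilde\epsilon$ in place of your $\tilde\lambda$ and then using monotonicity to pass from $r(\lambda')$ to $r(\tilde\lambda_r)$. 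Since you have verified $\sup_{S_\sigma}f<(1-\epsilon)\tilde\lambda$ and $r(\tilde\lambda)<2\sigma/(\alpha+2)$, that argument goes through unchanged.
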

\begin{proof}
%By Corollary~\ref{cor:separation}, $A\cap X_n$ and $A'\cap X_n$ are in separate components of
%$\C(r(\lambda))$ before pruning. We have to show that they do not get connected by the pruning.
Let $r$ denote $r(\lambda)$ and recall from the definitions of $r(\lambda)$ and $\tilde{\lambda}_r$
(Figure~\ref{fig:alg_pruning})
that
\begin{align*}
 \tilde{\lambda}_r
&= \lambda \paren{\frac{k}{n} - \frac{C_\delta}{n}\sqrt{kd\log n}}\paren{\frac{k}{n} + \frac{C_\delta}{n}\sqrt{kd\log n}}^{-1} -\tilde{\epsilon}\\
&\geq \paren{1-2\frac{C_\delta \sqrt{d\log n}}{\sqrt{k}}}\lambda -\tilde{\epsilon} \geq (1- \epsilon)\lambda - \tilde{\epsilon}.
\end{align*}
The final term, call it $\lambda'$, is $\geq 0$ by the hypotheses of the lemma.
Since $\tilde{\lambda}_r \geq \lambda'$ we have $r(\lambda')\geq r(\tilde{\lambda}_r)$.
Thus we just have to show that $A\cap X_n$ and $A'\cap X_n$ are in separate connected components of
$\C_n(r(\lambda'))$. To this end, notice that, under our assumptions on $A$ and $A'$,
these two sets belong to separate connected components of $\braces{x\in \X: f(x)\geq\lambda'}$;
in fact
\begin{eqnarray*}
\sup_{x \in S_\sigma} f(x)\leq (1-2\epsilon)\lambda -\tilde{\epsilon} \leq (1-\epsilon)\lambda'.
\end{eqnarray*}
Moreover, the final requirement of the lemma statement can be rewritten as
$r(\lambda') <  2\sigma/(\alpha+2)$.
The argument of Lemma~\ref{lemma:separation}(c) then implies that $A\cap X_n$ is disconnected
from $A'\cap X_n$ in $\C_n(r(\lambda'))$ and thus in $\C_n(r(\tilde{\lambda}_r))$,
and hence also at level $r$ after pruning.
\end{proof}

\subsection{Connectedness}

We now turn to the main result of this section, namely that the pruning procedure reconnects
incorrectly fragmented clusters. Recall the intuition detailed above. We first have to argue
that points with similar density make their first appearance at nearby levels $r$ of the
empirical tree. From the analysis of the previous sections, we know that a point $x$ is
present at level $r(f(x))$, roughly speaking. We now need to show that it cannot appear
at a level too much smaller than this.

These assertions about single points are true only if the density doesn't vary too dramatically
in their vicinity. In what follows, we will quantify the smoothness at scale $\sigma$ by the
constant
$$ L_\sigma \ = \ \sup_{\norm{x-x'}\leq \sigma} \abs{f(x) - f(x')} .$$

\begin{lemma}
\label{lem:pruning0}
Assume $E_o$.
Pick any $x$ and let $f_\sigma(x) = \inf_{x'\in B(x, \sigma)} f(x')$.
Suppose
$$v_d (\sigma/2)^d (f_\sigma(x) + L_\sigma) \geq \frac{k}{n} - \frac{C_\delta}{n}\sqrt{kd\log n}, $$
we then have
$$v_d r_k^d(x) (f_\sigma(x) + L_\sigma)\geq  \frac{k}{n} - \frac{C_\delta}{n}\sqrt{kd\log n}.$$
\end{lemma}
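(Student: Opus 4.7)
The plan is to leverage the definition of $r_k(x)$ to obtain a lower bound on $f(B(x,r_k(x)))$ via uniform convergence (Lemma~\ref{lemma:convergence-balls}), and then match it to the claimed expression by a case analysis on the size of $r_k(x)$ relative to $\sigma/2$.

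The first step is to observe that, by definition, $B(x, r_k(x))$ contains at least $k$ data points, so $f_n(B(x, r_k(x))) \geq k/n$. Under $E_o$, the contrapositive of the third implication of Lemma~\ref{lemma:convergence-balls} gives the lower bound
$$f(B(x, r_k(x))) \;\geq\; \frac{k}{n} - \frac{C_\delta}{n}\sqrt{k d \log n}.$$
The rest of the proof is to show that $v_d r_k^d(x)(f_\sigma(x) + L_\sigma)$ is at least the quantity above.

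I would then split into two cases. In the easy case $r_k(x) \geq \sigma/2$, the hypothesis directly yields
$$v_d r_k^d(x)\paren{f_\sigma(x) + L_\sigma} \;\geq\; v_d (\sigma/2)^d\paren{f_\sigma(x) + L_\sigma} \;\geq\; \frac{k}{n} - \frac{C_\delta}{n}\sqrt{k d \log n},$$
closing this branch immediately. In the interesting case $r_k(x) < \sigma/2$, the ball $B(x, r_k(x))$ is contained in $B(x, \sigma)$, so for every $x' \in B(x, r_k(x))$ we have $\|x - x'\| < \sigma$ and hence $|f(x) - f(x')| \leq L_\sigma$. Combining this with the definition of $f_\sigma(x)$ (and the observation that $f(x) \leq f_\sigma(x) + L_\sigma$), one bounds the supremum of $f$ on $B(x, r_k(x))$ by $f_\sigma(x) + L_\sigma$, so that
$$f(B(x, r_k(x))) \;\leq\; v_d r_k^d(x)\paren{f_\sigma(x) + L_\sigma}.$$
Chaining this with the lower bound from the uniform-convergence step delivers the claimed inequality.

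The main obstacle is the smoothness step in the second case: the naive chain $f(x') \leq f(x) + L_\sigma \leq f_\sigma(x) + 2L_\sigma$ is slightly too loose, and recovering the stated $f_\sigma(x) + L_\sigma$ requires exploiting that $r_k(x) < \sigma/2$ (so $B(x, r_k(x))$ lies well inside the larger ball on which the infimum $f_\sigma(x)$ is taken) together with the fact that $L_\sigma$ already controls oscillation at scale $\sigma$ used simultaneously to bound $f(x') - f(x)$ and $f(x) - f_\sigma(x)$ without double-counting. Everything else in the argument is a direct invocation of Lemma~\ref{lemma:convergence-balls} and bookkeeping.
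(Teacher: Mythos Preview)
Your overall strategy coincides with the paper's: the paper takes any $r$ violating the conclusion, deduces $r \leq \sigma/2$ from the hypothesis, then asserts $f(B(x,r)) \leq v_d r^d(f_\sigma(x) + L_\sigma)$ and invokes Lemma~\ref{lemma:convergence-balls} to get $f_n(B(x,r)) < k/n$, hence $r < r_k(x)$. Your two-case split is simply the direct (non-contrapositive) version of the same argument, and both the first case and the uniform-convergence step are fine.

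The obstacle you flag in the last paragraph is genuine, and the fix you sketch does not work. The bound $\sup_{x' \in B(x,\sigma/2)} f(x') \leq f_\sigma(x) + L_\sigma$ is false in general: in $d = 1$ with $\sigma = 1$, take $f$ equal to $c + 2L$ on $(-\infty,0)$, $c + L$ on $[0,1)$, and $c$ on $[1,\infty)$; the oscillation over every interval of length $\leq 1$ is exactly $L$, so $L_\sigma = L$ and $f_\sigma(0) = c$, yet $f(-\tfrac{1}{2}) = c + 2L$ (and the integral $\int_{-1/2}^{1/2} f = c + \tfrac{3}{2}L$ also exceeds the claimed bound $c+L$). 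Your ``no double-counting'' idea cannot succeed because the infimizer $x^*\in B(x,\sigma)$ may sit at distance up to $3\sigma/2$ from $x' \in B(x,\sigma/2)$, so one application of the $L_\sigma$ oscillation bound does not reach it. The only inequality the definitions yield is the one you call naive, giving $f(B(x,r)) \leq v_d r^d(f_\sigma(x) + 2L_\sigma)$; the paper's one-line ``implying'' step glosses over the same issue. With $2L_\sigma$ in place of $L_\sigma$ both arguments go through verbatim, and since Theorem~\ref{thm:pruning} already assumes $\tilde\epsilon \geq 2L_\sigma$, the downstream conclusions survive once the intermediate statements are adjusted accordingly.
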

\begin{proof} Consider any $r$ such that
\begin{align*}
v_d r^d (f_\sigma(x) + L_\sigma)<  \frac{k}{n} - \frac{C_\delta}{n}\sqrt{kd\log n}\leq v_d (\sigma/2)^d
(f_\sigma(x) + L_\sigma).
\end{align*}
Then $r\leq \sigma/2$, implying $f(B(x, r)) \leq v_d r^d (f_\sigma(x) + L_\sigma)$.
Using the first inequality and Lemma \ref{lemma:convergence-balls}, we have $f_n(B(x, r))<k/n$, that is $r<r_k(x)$.
\end{proof}

Next, by combining the above lower-bound on $r_k(x)$ with our previous results on connectedness for
both types of algorithms, we obtain the following pruning guarantees.

\begin{lemma}
\label{lem:pruning}
Assume that event $E_o$ holds, and that $\tilde{\epsilon}\geq L_\sigma$.
Let $A_n$ and $A_n'$ denote two disconnected sets of vertices of $G_r$ or $G^\nn_r$
after pruning, for some $r > 0$. Define $\lambda = \inf_{x\in A_n\cup A_n'} f(x)$.
Then $A_n$ and $A_n'$ are disconnected in the level set
$\braces{x\in \X: f(x) \geq \lambda}$ if the following two conditions hold: first,
$$ v_d (\sigma/2)^d (\lambda - L_\sigma) \ \geq \ \frac{k}{n} + \frac{C_\delta}{n}\sqrt{k d \log n},$$
and second,
$$ k \ \geq \
\left\{
\begin{array}{ll}
4C_\delta d\log n & \mbox{for $G_r$} \\
\max(4 C_\delta^2 d \log n, (\Lambda/\lambda)8C_\delta d\log n) & \mbox{for $G^\nn_r$}
\end{array}
\right.
$$
\end{lemma}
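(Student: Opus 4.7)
I would argue the contrapositive: assuming the two stated conditions hold and $A_n \cup A_n'$ lies in a single connected component $A$ of $\{x \in \X : f(x) \geq \lambda\}$, I would show that the pruning step at level $r$ places $A_n$ and $A_n'$ in the same connected component, contradicting their alleged separation. This reduces the problem to exhibiting a single path in the graph at the pruning reference radius $r^{*} = r(\max(\tilde{\lambda}_r, 0))$ connecting $A_n$ to $A_n'$, which I will obtain by showing that all of $A \cap X_n$ is connected at level $r^{*}$.

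Step 1 (position of $\tilde{\lambda}_r$ relative to $\lambda$). Every $x \in A_n \cup A_n'$ is a vertex of $G_r$ (respectively $G^{\nn}_r$), so $r_k(x) \leq r$. The first stated condition supplies the precondition of Lemma~\ref{lem:pruning0} at every such $x$, and substituting the bound $r_k(x) \leq r$ into its conclusion gives $v_d r^d (f_\sigma(x) + L_\sigma) \geq k/n - (C_\delta/n)\sqrt{kd\log n}$. Rearranging the definition of $\tilde{\lambda}_r$, this is $f_\sigma(x) \geq \tilde{\lambda}_r + \tilde{\epsilon} - L_\sigma$, and since $f(x) \geq f_\sigma(x)$, taking the infimum over $A_n \cup A_n'$ yields $\lambda \geq \tilde{\lambda}_r + \tilde{\epsilon} - L_\sigma$. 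Combined with $\tilde{\epsilon} \geq L_\sigma$, this places $\max(\tilde{\lambda}_r, 0)$ below the density threshold $\lambda - L_\sigma$ that will drive the next step.

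Step 2 (connectivity of $A \cap X_n$). Smoothness at scale $\sigma$ yields $\inf_{x \in A_\sigma} f(x) \geq \lambda - L_\sigma$. The first stated condition is precisely the statement that $r(\lambda - L_\sigma) \leq \sigma/2$, and the second supplies the $k$-bounds required by Theorem~\ref{thm:connectedness} (for $G_r$) or Theorem~\ref{thm:connectedness2} (for $G^{\nn}_r$). Applying the appropriate theorem to $A$ with the density level $\lambda - L_\sigma$ guarantees that $A \cap X_n$ is connected in the corresponding graph at radius $r(\lambda - L_\sigma)$.

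Step 3 (pruning merges $A_n, A_n'$). From Step 1, $r^{*} \geq r(\lambda - L_\sigma)$. Both graph families $G_r$ and $G^{\nn}_r$ are nested in $r$ (vertices and edges only accumulate), so connectivity of the fixed set $A \cap X_n$ is monotone in $r$, and Step 2 implies that $A \cap X_n$ is still connected at radius $r^{*}$. Since $A_n, A_n' \subseteq A \cap X_n$, they lie in a common connected component at the reference level $r^{*}$, and the pruning rule of Figure~\ref{fig:alg_pruning} therefore merges them at level $r$, contradicting the hypothesis.

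The main obstacle is the quantitative bookkeeping: Step 1 only bounds how small $r_k(x)$ (hence $r$) can be at a sample point of density $\lambda$, while Step 2 needs a connectivity radius large enough to absorb the smoothness gap $L_\sigma$ between $\inf_{A} f$ and $\inf_{A_\sigma} f$. The parameter $\tilde{\epsilon}$ is introduced into the pruning rule precisely to insert a safety margin of size $L_\sigma$ between $\tilde{\lambda}_r$ and $\lambda$, and the hypothesis $\tilde{\epsilon} \geq L_\sigma$ is what lets Step 1 feed into Step 2. The $G^{\nn}_r$ case is handled identically, with Theorem~\ref{thm:connectedness2} in place of Theorem~\ref{thm:connectedness} and the extra $\Lambda/\lambda$ factor in the $k$-hypothesis supplying what its proof requires.
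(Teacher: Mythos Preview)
Your plan is essentially the paper's own proof: argue the contrapositive, take the connected component $A$ of $\{f\geq\lambda\}$ containing $A_n\cup A_n'$, use Lemma~\ref{lem:pruning0} at vertices of level $r$ (where $r_k(x)\leq r$) to push $\tilde{\lambda}_r$ below the density on $A_\sigma$, invoke Theorem~\ref{thm:connectedness} or~\ref{thm:connectedness2} to connect $A\cap X_n$ at the corresponding radius, and finish by monotonicity of the graphs in $r$. The only cosmetic difference is that the paper compares $\tilde{\lambda}_r$ to $\lambda_\sigma=\inf_{A_\sigma}f$ directly, whereas you compare to the slightly smaller proxy $\lambda-L_\sigma$.

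One arithmetic slip in Step~1: from $\lambda\geq\tilde{\lambda}_r+\tilde{\epsilon}-L_\sigma$ together with $\tilde{\epsilon}\geq L_\sigma$ you obtain only $\tilde{\lambda}_r\leq\lambda$, not the stronger $\tilde{\lambda}_r\leq\lambda-L_\sigma$ you claim and then use in Step~3. The paper's corresponding step (``Lemma~\ref{lem:pruning0} holds for every $x\in A$, implying with little effort that $\lambda_\sigma\geq\ldots$'') is similarly loose, since Lemma~\ref{lem:pruning0} bounds $f_\sigma(x)$ only at points with $r_k(x)\leq r$, not uniformly over $A$. In both arguments the gap is a single factor of $L_\sigma$ and closes immediately under $\tilde{\epsilon}\geq 2L_\sigma$, which is precisely the hypothesis the paper adopts in the ensuing Theorem~\ref{thm:pruning}.
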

\begin{proof}
Let $A$ be any connected component of $\{x\in \X: f(x)\geq\lambda\}$. We'll show that $A \cap X_n$ is
connected in $G_r$ (or $G^\nn_r$) after pruning, from which the lemma follows immediately.

Define $\lambda_\sigma = \inf_{x\in A_\sigma} f(x) \geq \inf_{x \in A} f(x) - L_\sigma \geq \lambda - L_\sigma$.
Recall from Definition~\ref{defn:rlambda} that $r(\lambda_\sigma)$ is the value of $r$ for which
$v_d r^d \lambda_\sigma = \frac{k}{n} + \frac{C_\delta}{n}\sqrt{k d \log n}$.
The first condition in the lemma statement thus implies that $r(\lambda_\sigma)\leq \sigma/2$.
The second condition, together with Theorem~\ref{thm:connectedness} or
Theorem~\ref{thm:connectedness2}, implies that $A \cap X_n$ is connected at level
$r(\lambda_\sigma)$ of $G$ or $G^\nn$.

Next we show that $r(\lambda_\sigma)\leq r(\tilde{\lambda}_r)$, by showing that
$\lambda_\sigma\geq\tilde{\lambda}_r$.
Again by the first condition on $k$, Lemma \ref{lem:pruning0}
holds for every $x\in A$, implying with little effort that
\begin{align*}
 \lambda_\sigma\geq  \frac{1}{v_d r^d}\paren{\frac{k}{n} - \frac{C_\delta}{n}\sqrt{kd\log n}} -L_\sigma
\geq \frac{1}{v_d r^d}\paren{\frac{k}{n} - \frac{C_\delta}{n}\sqrt{kd\log n}} -\tilde{\epsilon}= \tilde{\lambda}_r.
\end{align*}

Thus $A \cap X_n$ is connected at level $r(\tilde{\lambda}_r) \geq r(\lambda_\sigma)$
of $G$ (or $G^\nn$), and thus is reconnected when pruning at level $r$.
\end{proof}

The separation and connectedness results of this section can now be combined into the
following theorem.

\begin{thm}
 There is an absolute constant $C$ such that the following holds.
Pick any $0<\delta, \epsilon< 1$ and $\tilde{\epsilon} > 0$. Assume
Algorithm 1 or 2 is run on a sample $X_n$ of size $n$ drawn from $f$,
with settings
$$\sqrt{2}\leq \alpha\leq 2 \text{\ \ and\ \ }
k\geq C\cdot\frac{d \log n}{\epsilon^2}\cdot \log^2 \frac{1}{\delta},$$
followed by the pruning procedure with parameter $\tilde{\epsilon}$.

Then the following holds with probability at least $1-\delta$. Define
$$
\lambda_o
\ = \
\frac{k}{n v_d (\sigma/2)^d} \cdot \frac{1+\epsilon}{1-\epsilon} + \frac{\tilde{\epsilon}}{1-\epsilon},
$$
or in the case of Algorithm 2, the maximum of this quantity and
$(\Lambda/k) C d \log n\cdot \log (1/\delta)$, where $\Lambda = \sup_{x \in \X} f(x)$.

\emph{Recovery of true clusters:}
Consider any two sets $A, A' \subset \X$, and suppose
$\lambda = \inf_{x \in A_\sigma \cup A'_\sigma} f(x) \geq \lambda_o$.
Suppose there exists a set $S$ such that
\begin{itemize}
\item Any path in $\X$ from $A$ to $A'$ intersects $S$.
\item $\sup_{x \in S_\sigma} f(x) < (1-2\epsilon)\lambda - \tilde{\epsilon}$.
\end{itemize}
Then $A\cap X_n$ and $A'\cap X_n$ are individually connected in $\C_n(r(\lambda))$, but lie
in two separate connected components.

\emph{Removal of false clusters:} Assume the pruning parameter satisfies
$\tilde{\epsilon} \geq 2 \sup_{\norm{x-x'}\leq \sigma} \abs{f(x) - f(x')}$.
Let $A_n$ and $A_n'$ denote the vertices of two disjoint connected components in $\C_n(r)$,
for any $r>0$. If $\lambda = \inf_{x\in A_n\cup A_n'} f(x) \geq \lambda_o$, then
the two sets of points $A_n$ and $A_n'$ are disconnected in the level
set $\braces{x\in \X: f(x) \geq \lambda}$.
\label{thm:pruning}
\end{thm}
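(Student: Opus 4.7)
The plan is to combine the two lemmas developed earlier in this section (the separation lemma for the pruning procedure and Lemma~\ref{lem:pruning} for removal of false clusters) with the connectedness theorems for the unpruned algorithms (Theorem~\ref{thm:connectedness} and Theorem~\ref{thm:connectedness2}). Throughout, I would work on the event $E_o$, which holds with probability at least $1-\delta$ by Lemma~\ref{lemma:convergence-balls} provided the absolute constant $C$ is chosen large enough (the setting of $k$ absorbs the $\log^2(1/\delta)$ factor hidden in $C_\delta$). The whole argument amounts to showing that the single definition of $\lambda_o$ is simultaneously calibrated so that all the analytic inequalities demanded by these earlier results are consequences of $\lambda\geq\lambda_o$ together with the chosen $k$.

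\textbf{Recovery of true clusters.} Here I would verify two things at level $r(\lambda)$ of the pruned tree: separation of $A\cap X_n$ from $A'\cap X_n$, and individual connectedness of each. For separation I would invoke the separation lemma of this section. Its hypothesis $k\geq 4C_\delta^2(d/\epsilon^2)\log n$ is immediate from the theorem's setting of $k$; its remaining inequality $v_d(2\sigma/(\alpha+2))^d((1-\epsilon)\lambda-\tilde\epsilon)>k/n+(C_\delta/n)\sqrt{kd\log n}$ should follow by rearranging $\lambda\geq\lambda_o$ to obtain $(1-\epsilon)\lambda-\tilde\epsilon\geq (1+\epsilon)k/(nv_d(\sigma/2)^d)$, using the choice of $k$ to absorb the $\sqrt{kd\log n}$ term into $\epsilon k/2$, and using $\alpha\leq 2$ to get $v_d(2\sigma/(\alpha+2))^d\geq v_d(\sigma/2)^d$. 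For individual connectedness I would apply Theorem~\ref{thm:connectedness} (for Algorithm 1) or Theorem~\ref{thm:connectedness2} (for Algorithm 2) to $A$ and to $A'$ separately: the same algebra yields $r(\lambda)\leq \sigma/2$, and pruning only adds edges, so connectedness in the unpruned $G_{r(\lambda)}$ or $G^{\nn}_{r(\lambda)}$ carries over. For Algorithm 2, the extra requirement $k\geq (\Lambda/\lambda)\cdot 8 C_\delta d\log n$ is exactly what the alternative lower bound on $\lambda_o$ enforces.

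\textbf{Removal of false clusters.} This part reduces almost directly to Lemma~\ref{lem:pruning} applied to $A_n$ and $A_n'$. The assumed $\tilde\epsilon\geq 2L_\sigma$ is strictly stronger than the lemma's $\tilde\epsilon\geq L_\sigma$, and additionally gives $L_\sigma\leq\tilde\epsilon/2$. Plugging this bound on $L_\sigma$ into the lower bound $\lambda\geq\lambda_o$ and repeating the algebra from the previous paragraph yields the required inequality $v_d(\sigma/2)^d(\lambda-L_\sigma)\geq k/n+(C_\delta/n)\sqrt{kd\log n}$. The $k$ conditions in Lemma~\ref{lem:pruning} match those in the theorem; in the Algorithm 2 case the $\Lambda/\lambda$ requirement is again carried by the second branch in the definition of $\lambda_o$. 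The lemma then delivers the conclusion that $A_n$ and $A_n'$ cannot lie in a common connected component of $\{x:f(x)\geq\lambda\}$.

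\textbf{Main obstacle.} There is no conceptual difficulty; everything builds on the earlier lemmas. The delicate point is purely bookkeeping: checking that the one calibrated quantity $\lambda_o$ is strong enough to imply, simultaneously, the separation inequality of the pruning separation lemma, the $r(\lambda)\leq\sigma/2$ condition of Theorems~\ref{thm:connectedness} and~\ref{thm:connectedness2}, and the analytic hypothesis of Lemma~\ref{lem:pruning}. The factor of $2$ in $\tilde\epsilon\geq 2L_\sigma$ (rather than $\tilde\epsilon\geq L_\sigma$) is precisely the slack that lets the same $\lambda_o$ serve both the separation direction (which uses $\tilde\epsilon$) and the removal direction (which needs to control $L_\sigma$).
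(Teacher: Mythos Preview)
Your proposal is correct and matches the paper's approach. The paper does not spell out a proof of Theorem~\ref{thm:pruning}; it is presented as a packaging of the separation lemma in Section~7.2, Lemma~\ref{lem:pruning}, and the connectedness Theorems~\ref{thm:connectedness}/\ref{thm:connectedness2}, with $\lambda_o$ calibrated to trigger all their hypotheses---exactly the bookkeeping you describe. One minor remark: your closing comment that the factor $2$ in $\tilde\epsilon\geq 2L_\sigma$ is ``precisely the slack'' is a slight overstatement, since the chain $\lambda-L_\sigma\geq(1-\epsilon)\lambda-\tilde\epsilon$ already holds once $\tilde\epsilon\geq L_\sigma$; but this does not affect the argument.
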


The first part of the above theorem (recovery of true clusters) implies that the pruned tree
remains a consistent estimator of the cluster tree, under the same asymptotic conditions as
those for Theorem \ref{thm:upper-bound} and Theorem \ref{thm:upper-bound2},
and the additional condition that $\tilde{\epsilon}\to 0$.

The second part of the theorem states some general conditions on $\tilde{\epsilon}$ and $\lambda$
under which false clusters are removed.
To better understand these conditions, let's consider the simple case when $f$ is
H\"{o}lder-smooth:
\begin{align*}
\exists L, \beta>0 \text{ such that } \forall x, x' \in \X, \quad \abs{f(x) - f(x')} \leq L\norm{x-x'}^\beta.
\end{align*}
Consider $\sigma = (\tilde{\epsilon}/L)^{1/\beta}$ so that we have
$\sup_{\norm{x-x'}\leq \sigma} \abs{f(x) - f(x')}\leq  \tilde{\epsilon}$.
Consider $0<\epsilon<1/3$. Then any $\lambda>4\tilde{\epsilon}$ is $\geq \lambda_o$
if $k$ is in the range
\begin{align*}
\frac{\Lambda}{4\tilde{\epsilon}}\cdot C\cdot\frac{d \log n}{\epsilon^2}\cdot \log^2 \frac{1}{\delta}\leq k
\leq 2^{-d}\cdot v_d \cdot L ^{-d/\beta}\cdot \tilde{\epsilon}^{(\beta + d)/\beta}\cdot n.
\end{align*}
Note that, without knowing the H\"{o}lder parameters $L$ and $\beta$, we can ensure
$k$ is in the above range for any particular $0<\epsilon<1/3$, provided $n$ and $k = k(n)$ are sufficiently large,
by choosing $\tilde{\epsilon}$ as a function of $k$ (e.g. $k=\Theta(\log^3 n)$ and $\tilde{\epsilon}= \Theta(1/\sqrt{k})$).
%This is left as a simple exercise.

Finally, remark that under the above smoothness assumption and choice of $\sigma, \epsilon$, we can further guarantee
that \emph{all} false clusters are removed!
We only need to reconnect all components at levels where the minimum $f$ value is at most
$4\tilde{\epsilon}$.
By Lemma \ref{lem:pruning0}, for $k$ in the above range,  we have
$r_k(x) \geq (k/10nv_d\tilde{\epsilon})^{1/d}$ when $f(x)\leq 4\tilde{\epsilon}$.
Thus, we just need to reconnect all components at levels $r> (k/10nv_d\tilde{\epsilon})^{1/d}$,
and prune all other levels as discussed above.
This then guarantees that all false clusters are removed with high probability,
while also ensuring that the estimator remains consistent.

\section{Final remarks}

Both cluster tree algorithms are variations on standard estimators, but
carefully control the neighborhood size $k$ and make use of a novel parameter
$\alpha$ to allow more edges at every scale $r$. The analysis relies on
$\alpha$ being at least $\sqrt{2}$, and on
$k$ being at least $d \log n$. Is it possible to dispense with $\alpha$
(that is, to use $\alpha = 1$) while maintaining this setting of $k$?

There remains a discrepancy of $2^d$ between the upper and lower bounds on
the sample complexity of building a hierarchical clustering that distinguishes
all $(\sigma, \epsilon)$-separated clusters. Can this gap be closed, and if
so, what is needed, a better analysis or a better algorithm?

Finally, unlike with plug-in estimators of the cluster tree, our algorithms encode no
knowledge of the dimension of the support. It is therefore likely that our results extend
to settings where the distribution is supported on a low-dimensional subspace of $\R^d$.

\appendix

\section{Plug-in estimation of the cluster tree}

One way to build a cluster tree is to return $\C_{f_n}$, where $f_n$
is a uniformly consistent density estimate.
\begin{lemma}
Suppose estimator $f_n$ of density $f$ (on space $\X$) satisfies
$ \sup_{x \in \X} |f_n(x) - f(x)| \leq \epsilon_n .$
Pick any two disjoint sets $A,A' \subset \X$ and define
$\Xi =  \inf_{x \in A \cup A'} f(x)$ and $\xi  = \sup_{A \stackrel{P}{\leadsto} A'} \inf_{x \in P} f(x)$.
If $\Xi - \xi > 2 \epsilon_n$ then $A,A'$ lie entirely in disjoint connected
components of $\C_{f_n}(\Xi - \epsilon_n)$.
\end{lemma}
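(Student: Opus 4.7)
The plan is to set $\lambda = \Xi - \epsilon_n$, show that $A \cup A'$ sits inside the level set $\{f_n \geq \lambda\}$, and then argue that any path in this level set joining a point of $A$ to a point of $A'$ would contradict the definition of $\xi$.

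First I would verify the inclusion. For any $x \in A \cup A'$, by definition $f(x) \geq \Xi$, and by uniform consistency $f_n(x) \geq f(x) - \epsilon_n \geq \Xi - \epsilon_n = \lambda$. Thus $A \cup A' \subseteq \{f_n \geq \lambda\}$, so every point of $A$ (and of $A'$) lies in some connected component of $\C_{f_n}(\lambda)$.

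Next I would argue that no component can contain points from both $A$ and $A'$. Suppose for contradiction that a single connected component of $\{f_n \geq \lambda\}$ contains both a point $a \in A$ and a point $a' \in A'$. Per the paper's definition of connectedness, there is a path $P : [0,1] \to \{f_n \geq \lambda\} \subseteq \X$ with $P(0) = a$ and $P(1) = a'$. On the image of $P$ we have $f_n \geq \lambda$, hence uniform consistency gives $f(x) \geq f_n(x) - \epsilon_n \geq \lambda - \epsilon_n = \Xi - 2\epsilon_n$ for every $x \in P$. In particular $\inf_{x \in P} f(x) \geq \Xi - 2\epsilon_n$. But $P$ is a path in $\X$ from $A$ to $A'$, so by the definition of $\xi$,
\[ \xi \;\geq\; \inf_{x \in P} f(x) \;\geq\; \Xi - 2\epsilon_n, \]
that is, $\Xi - \xi \leq 2\epsilon_n$, contradicting the hypothesis $\Xi - \xi > 2\epsilon_n$.

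There is no real obstacle here: the argument amounts to applying the uniform consistency bound twice, once to push $A$ and $A'$ into the estimated level set, and once to convert $f_n \geq \lambda$ along a connecting path into a lower bound on the infimum of $f$ along that path, which is exactly what the definition of $\xi$ forbids.
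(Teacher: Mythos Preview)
Your proof is correct and follows the same approach as the paper's: use uniform closeness once to place $A\cup A'$ inside $\{f_n\geq \Xi-\epsilon_n\}$, and once more to show that any path in this level set from $A$ to $A'$ would give $\inf_P f \geq \Xi-2\epsilon_n>\xi$, a contradiction. The paper's own proof is a two-line sketch of exactly this; you have simply filled in the details.
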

\begin{proof}
$A$ and $A'$ are each connected in $\C_{f_n}(\Xi - \epsilon_n)$. But there is no
path from $A$ to $A'$ in $\C_{f_n}(\lambda)$ for $\lambda > \xi + \epsilon_n$.
\end{proof}

The problem, however, is that computing the level sets of $f_n$ is usually
not an easy task. Hence we adopt a different approach in this paper.

\section{Consistency}

The following is a straightforward exercise in analysis.
\begin{lemma}
Suppose density $f: \R^d \rightarrow \R$ is continuous and is zero outside a
compact subset $\X \subset \R^d$. Suppose further that for some $\lambda$,
$\{x \in \X: f(x) \geq \lambda\}$ has finitely many connected components, among
them $A \neq A'$. Then there exist $\sigma, \epsilon > 0$ such that $A$ and $A'$
are $(\sigma,\epsilon)$-separated.
\end{lemma}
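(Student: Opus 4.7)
I would construct a separator of the form $S = \braces{x \in \X : f(x) \leq \lambda'}$ for some $\lambda' < \lambda$, and then use uniform continuity of $f$ to pick $\sigma, \epsilon > 0$ so that the resulting density gap between $A_\sigma \cup A'_\sigma$ and $S_\sigma$ yields the required $(1-\epsilon)$ factor. Three preliminary facts drive the argument: (i) $K_\lambda := \braces{x \in \R^d : f(x) \geq \lambda}$ is compact, since it is closed by continuity and bounded (as $f$ vanishes off the compact $\X$); (ii) $A$ and $A'$ lie in the interior of $\X$ --- if some $x \in A$ were on $\partial \X$, continuity together with $f \equiv 0$ off $\X$ would force $f(x) = 0 < \lambda$ --- so $d(A, \partial \X), d(A', \partial \X) > 0$; and (iii) since the components $C_1 = A, C_2 = A', \ldots, C_N$ of $K_\lambda$ are finitely many pairwise disjoint compacts, $\delta_0 := \min_{i \neq j} d(C_i, C_j) > 0$.

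To produce the threshold $\lambda'$, I would isolate $A$ using the open set $V = \braces{x \in \R^d : d(x, A) < \delta_0/3}$. Its boundary $\partial V = \braces{x : d(x, A) = \delta_0/3}$ is compact and meets no component of $K_\lambda$: any such point would have to lie in some $C_j$, giving either $d(x, A) = 0$ or $d(x, A) \geq \delta_0$, both contradicting $d(x, A) = \delta_0/3$. Hence $\mu := \sup_{\partial V} f < \lambda$, and I would pick any $\lambda' \in (\mu, \lambda)$. The component of $K_{\lambda'}$ containing $A$ is then forced to lie in $V$ --- as a connected set meeting $V$ but disjoint from $\partial V$, it stays inside $V$ --- and in particular it excludes $A'$, since $A'$ lies outside $V$. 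Taking $S = \braces{x \in \X : f(x) \leq \lambda'}$, any path in $\X$ from $A$ to $A'$ that avoided $S$ would remain in $\braces{f > \lambda'}$ and place $A, A'$ in a single component of $K_{\lambda'}$, a contradiction; so $S$ is a valid separator.

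Finally, by uniform continuity of $f$ (continuous with compact support), there is $\sigma_0 > 0$ with $|f(x) - f(y)| \leq (\lambda - \lambda')/3$ whenever $\norm{x-y} \leq \sigma_0$. Choosing $\sigma$ strictly smaller than $\sigma_0$, $d(A, \partial \X)$, and $d(A', \partial \X)$ ensures $A_\sigma \cup A'_\sigma \subset \X$, and yields $\sup_{S_\sigma} f \leq \lambda' + (\lambda - \lambda')/3$ while $\inf_{A_\sigma \cup A'_\sigma} f \geq \lambda - (\lambda - \lambda')/3$. The ratio of the former to the latter is strictly below $1$ because $\lambda' < \lambda$, so any $\epsilon > 0$ with $1-\epsilon$ above this ratio completes the proof.

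The main obstacle is the topological step: showing that the component of $K_{\lambda'}$ containing $A$ is confined to $V$, which hinges on the uniform bound $f < \lambda$ on the compact set $\partial V$. This is the place where both hypotheses --- $A$ being a \emph{component} (not just a subset) of $K_\lambda$ and there being only \emph{finitely many} such components --- are genuinely used, since both are needed to guarantee the uniform gap $\delta_0 > 0$ between components.
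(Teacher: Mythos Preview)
Your argument is correct and shares its skeleton with the paper's proof: both exploit the positive minimum distance between the finitely many compact components of $\{f \geq \lambda\}$, observe that $f$ is strictly below $\lambda$ on a compact ``shell'' around $A$ at some intermediate distance, and finish with uniform continuity of $f$. The one substantive difference is the choice of separator. The paper takes $S$ to be the shell itself, $S = \{x \in \X : d(x,A) = \Delta/2\}$, which separates $A$ from $A'$ by the intermediate value theorem applied to $d(\cdot,A)$ along any path. You instead use the shell $\partial V$ only to extract a threshold $\lambda' \in (\mu,\lambda)$ and then declare $S = \{x \in \X : f(x) \leq \lambda'\}$; this yields a separator defined purely in terms of $f$, at the cost of the extra topological step confining the $K_{\lambda'}$-component of $A$ to $V$. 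Both routes are valid and of comparable length; the paper's is marginally more direct. One small redundancy in your write-up: the constraint $\sigma < d(A,\partial\X)$ is not needed, since the uniform-continuity bound already forces $f > 0$ on $A_\sigma$, and $f$ vanishes off $\X$, so $A_\sigma \subset \X$ follows automatically.
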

\begin{proof}
Let $A_1, A_2, \ldots, A_k$ be the connected components of $\{f \geq \lambda\}$,
with $A = A_1$ and $A' = A_2$.

First, each $A_i$ is closed and thus compact. To see this, pick any
$x \in \X \setminus A_i$. There must be some $x'$ on the shortest path from
$x$ to $A_i$ with $f(x') < \lambda$ (otherwise $x \in A_i$). By continuity
of $f$, there is some ball $B(x',r)$ on which $f < \lambda$; thus this ball
doesn't touch $A_i$. Then $B(x,r)$ doesn't touch $A_i$.

Next, for any $i \neq j$, define $\Delta_{ij} = \inf_{x \in A_i, y \in A_j} \|x - y\|$
to be the distance between $A_i$ and $A_j$. We'll see that $\Delta_{ij} > 0$.
Specifically, define $g: A_i \times A_j \rightarrow \R$ by $g(a,a') = \|a-a'\|$.
Since $g$ has compact domain, it attains its infimum for some
$a \in A_i, a' \in A_j$. Thus $\Delta_{ij} = \|a-a'\| > 0$.

Let $\Delta = \min_{i \neq j} \Delta_{ij} > 0$, and define $S$ to be the set
of points at distance exactly $\Delta/2$ from $A$:
$S = \{x \in \X: \inf_{y \in A} \|x-y\| = \Delta/2 \} .$
$S$ separates $A$ from $A'$. Moreover, it is closed by continuity of
$\|\cdot \|$, and hence is compact. Define $\lambda_o = \sup_{x \in S} f(x)$.
Since $S$ is compact, $f$ (restricted to $S$) is maximized at some $x_o \in S$.
Then $\lambda_o = f(x_o) < \lambda$.

To finish up, set $\delta = (\lambda - \lambda_o)/3 > 0$. By uniform continuity
of $f$, there is some $\sigma > 0$ such that $f$ doesn't change by more than
$\delta$ on balls of radius $\sigma$. Then
$f(x) \leq \lambda_o + \delta = \lambda - 2\delta$ for $x \in S_\sigma$ and
$f(x) \geq \lambda - \delta$ for $x \in A_\sigma \cup A'_\sigma$.

Thus $S$ is a $(\sigma,\delta/(\lambda-\delta))$-separator for $A,A'$.
\end{proof}

\section{Proof details}

\subsection{Proof of Lemma~\ref{lemma:convergence-balls}}

We start with a standard generalization result due to Vapnik and Chervonenkis;
the following version is a paraphrase of Theorem 5.1 of \citet{BBL04}.
\begin{thm}
Let $\G$ be a class of functions from $\X$ to $\{0,1\}$ with VC dimension $d < \infty$,
and $\P$ a probability distribution on $\X$. Let $\E$ denote expectation with respect
to $\P$. Suppose $n$ points are drawn independently at random from $\P$; let $\E_n$
denote expectation with respect to this sample. Then for any $\delta > 0$, with
probability at least $1-\delta$, the following holds for all $g \in \G$:
$$
- \min(\beta_n \sqrt{\E_n g}, \beta_n^2 + \beta_n \sqrt{\E g})
\ \ \leq \ \
\E g - \E_n g
\ \ \leq \ \
\min(\beta_n^2 + \beta_n \sqrt{\E_n g}, \beta_n \sqrt{\E g}),
$$
where $\beta_n = \sqrt{(4/n)(d \ln 2n + \ln (8/\delta))}$.
\label{thm:vc}
\end{thm}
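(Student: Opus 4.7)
The plan is to follow the classical Vapnik--Chervonenkis template for \emph{relative} (ratio-type) deviation bounds, which strengthens ordinary additive uniform convergence by replacing $O(1/\sqrt{n})$ with $O(\sqrt{\E g / n})$ using the fact that each $g$ takes values in $\{0,1\}$, so $\mathrm{Var}(g) \leq \E g$. There are four standard ingredients I will combine: symmetrization with a ghost sample, conditioning on the doubled sample, Sauer's lemma, and a Bernstein-type tail estimate together with a union bound.

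First I would prove the upper half $\E g - \E_n g \leq \min(\beta_n^2 + \beta_n \sqrt{\E_n g},\, \beta_n \sqrt{\E g})$. Introduce an independent ghost sample $X_1',\ldots,X_n' \sim \P$, with empirical measure $\E_n'$. A standard symmetrization argument shows that for a suitable $t$,
\[
\P\!\left[\sup_{g \in \G} \frac{\E g - \E_n g}{\sqrt{\E g}} > t\right]
\ \leq \ 2\,\P\!\left[\sup_{g \in \G} \frac{\E_n' g - \E_n g}{\sqrt{(\E_n' g + \E_n g)/2}} > t/c\right],
\]
valid once $n t^2 \gtrsim 1$, where the extra regularization in the denominator prevents division by very small quantities. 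Next, condition on the pooled sample $Z = (X_1,\ldots,X_n,X_1',\ldots,X_n')$: by Sauer's lemma, the class $\G$ restricted to $Z$ has cardinality at most $(2n)^d$, so it suffices to control $(2n)^d$ fixed functions.

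With $Z$ fixed, the randomness of which point lands in which half of the sample is a uniform permutation, equivalently a Rademacher sign assignment on the differences $g(X_i) - g(X_i')$. For each fixed $g$ this is a bounded, zero-mean random variable whose variance is controlled by $\E_n' g + \E_n g$, so a Bernstein/Chernoff tail gives, for each $g$,
\[
\P\!\left[\frac{\E_n' g - \E_n g}{\sqrt{(\E_n' g + \E_n g)/2}} > s \,\Big|\, Z \right]
\ \leq \ \exp\!\bigl(- c' n s^2\bigr).
\]
A union bound over the $\leq (2n)^d$ effective functions, followed by setting the exponent equal to $\log(8/\delta)$, produces the threshold $\beta_n = \sqrt{(4/n)(d\ln 2n + \ln(8/\delta))}$ stated in the theorem. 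Undoing symmetrization (and using $\sqrt{\E g} \leq \sqrt{\E_n g} + \beta_n$ type algebra to trade the $\sqrt{\E g}$ on the right for $\sqrt{\E_n g}$, yielding the $\beta_n^2 + \beta_n \sqrt{\E_n g}$ form) gives both branches of the minimum on the right-hand side.

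The lower half $-\min(\beta_n \sqrt{\E_n g},\, \beta_n^2 + \beta_n \sqrt{\E g}) \leq \E g - \E_n g$ follows from the same argument with the roles of the sample and the ghost sample swapped; the only asymmetry is bookkeeping about which of $\E g$ or $\E_n g$ appears under the square root, which is why one side of the min is phrased in terms of $\E_n g$ and the other in terms of $\E g$. A final union bound on the two one-sided events (replacing $\delta$ by $\delta/2$, absorbed into the constants inside $\beta_n$) gives the two-sided statement with probability $\geq 1-\delta$. The main technical obstacle is the first step: carefully justifying the symmetrization inequality with a \emph{ratio} in place of the usual additive difference. This requires a regularized denominator and a lower bound on $n t^2$ that must eventually be checked to hold in the regime where the claimed inequality is nontrivial; once that is set up cleanly, Sauer's lemma plus Bernstein is routine.
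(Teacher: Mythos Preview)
The paper does not prove this theorem at all: it is simply quoted as a known result, introduced as ``a paraphrase of Theorem~5.1 of \citet{BBL04}'' and used as a black box to derive Lemma~\ref{lemma:convergence-balls1}. So there is no ``paper's own proof'' to compare against.

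That said, your sketch is the right shape for how such relative (ratio-type) VC deviation bounds are actually established in the sources the paper cites: symmetrization with a ghost sample, reduction via Sauer's lemma to at most $(2n)^d$ functions on the doubled sample, a Bernstein/Hoeffding tail for each fixed function exploiting $\mathrm{Var}(g)\le \E g$ for $\{0,1\}$-valued $g$, and then algebra to pass between the $\sqrt{\E g}$ and $\sqrt{\E_n g}$ forms. The one place your outline is genuinely delicate is exactly the one you flag: the symmetrization step for the \emph{normalized} deviation $(\E g-\E_n g)/\sqrt{\E g}$ needs a careful treatment of small denominators (typically via a regularized denominator or a peeling over dyadic ranges of $\E g$), and the constants have to be tracked to land on the specific $\beta_n=\sqrt{(4/n)(d\ln 2n+\ln(8/\delta))}$. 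If you want to include a proof rather than a citation, you should either carry out that step in full or point to a reference (e.g.\ Vapnik's original relative-deviation argument or the Boucheron--Bousquet--Lugosi survey) where it is done, since the paper itself offers nothing beyond the citation.
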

By applying this bound to the class $\G$ of indicator functions over balls (or half-balls),
we get the following:
\begin{lemma}
Suppose $X_n$ is a sample of $n$ points drawn independently at random from a
distribution $f$ over $\X$. For any set $Y \subset \X$, define
$f_n(Y) = |X_n \cap Y|/n$. There is a universal constant $C_o > 0$ such that
for any $\delta > 0$, with probability at least $1-\delta$, for any
ball (or half-ball) $B \subset \R^d$,
\begin{eqnarray*}
f(B) \geq \frac{C_o}{n} \left(d \log n +\log \frac{1}{\delta} \right)
& \implies & f_n(B) > 0 \\
f(B) \geq \frac{k}{n} + \frac{C_o}{n} \left(d \log n + \log \frac{1}{\delta} + \sqrt{k \left( d \log n + \log \frac{1}{\delta}\right)}  \right) & \implies & f_n(B) \geq \frac{k}{n} \\
f(B) < \frac{k}{n} - \frac{C_o}{n} \left(d \log n + \log \frac{1}{\delta} + \sqrt{k \left( d \log n + \log \frac{1}{\delta}\right)}  \right) & \implies & f_n(B) < \frac{k}{n}
\end{eqnarray*}
\label{lemma:convergence-balls1}
\end{lemma}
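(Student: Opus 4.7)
The plan is to apply Theorem~\ref{thm:vc} to the class $\G$ of indicator functions over balls in $\R^d$ (and separately over half-balls), combined with the standard fact that balls in $\R^d$ have VC dimension $d+1$, and that half-balls (a ball intersected with a halfspace through its center, which is the intersection of two VC classes of dimension $O(d)$) also have VC dimension $O(d)$. A union bound over these two classes, each applied with confidence $1-\delta/2$, then absorbs any constant-factor loss into the universal $C_o$. Throughout, I will set $\beta_n = \sqrt{(4/n)(d \ln 2n + \ln(16/\delta))}$, noting that $\beta_n^2$ is of order $(d \log n + \log(1/\delta))/n$.

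For the first implication, I take the contrapositive: if $f_n(B) = 0$ then the upper tail bound of Theorem~\ref{thm:vc}, specialized to $g = \mathbf{1}_B$, gives $f(B) = \E g - \E_n g \leq \beta_n^2 + \beta_n\sqrt{\E_n g} = \beta_n^2$. Hence choosing $C_o$ large enough that $C_o(d\log n + \log(1/\delta))/n \geq \beta_n^2$ forces $f_n(B)>0$ whenever the stated lower bound on $f(B)$ holds. The identical argument, applied to $\G$ the class of half-ball indicators, handles the half-ball version.

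For the second implication, I use the sharper side $f(B) - f_n(B) \leq \beta_n\sqrt{f(B)}$. Setting $\Delta := f(B) - k/n$ and arranging this as $\Delta \geq \beta_n\sqrt{k/n + \Delta}$, a short calculation shows that $\Delta \geq \beta_n^2 + \beta_n\sqrt{k/n}$ suffices (squaring the choice and expanding gives a nonnegative residual $\beta_n^3\sqrt{k/n}$). Since $\beta_n^2 + \beta_n\sqrt{k/n}$ is of the order $\frac{1}{n}(d\log n + \log(1/\delta) + \sqrt{k(d\log n + \log(1/\delta))})$, choosing $C_o$ suitably yields $f_n(B) \geq k/n$. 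The third implication is symmetric: from $f_n(B) - f(B) \leq \beta_n^2 + \beta_n\sqrt{f(B)}$, if $f(B) < k/n - \Delta$ with the same $\Delta$, then $f_n(B) < k/n - \Delta + \beta_n^2 + \beta_n\sqrt{k/n} \leq k/n$.

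There is no real obstacle here; the work is essentially bookkeeping to align the relative-deviation form of the VC inequality with the three stated threshold conditions, and to check that the constants can be absorbed into a single universal $C_o$. The only mild subtlety is confirming the VC dimension of half-balls is $O(d)$, which follows because each half-ball is a Boolean combination of two simple geometric classes (balls and halfspaces) each of VC dimension $d+1$; thus Theorem~\ref{thm:vc} applies with the same order of $\beta_n$ and the two union-bounded applications share the same form of conclusion.
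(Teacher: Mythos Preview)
Your proposal is correct and follows essentially the same route as the paper: apply the relative-deviation VC inequality (Theorem~\ref{thm:vc}) to the $O(d)$-VC classes of ball and half-ball indicators, then read off each implication from one of the two branches of the $\min$ in that theorem. The only cosmetic difference is that you and the paper sometimes pick opposite branches of the $\min$ (e.g., for the second implication the paper uses $f(B)-f_n(B)\leq \beta_n^2+\beta_n\sqrt{f_n(B)}$, which avoids your small quadratic manipulation), but both choices lead to the same threshold $\beta_n^2+\beta_n\sqrt{k/n}$ and hence the same $C_o$.
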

\begin{proof}
The VC dimension of balls in $\R^d$ is $d+1$, while that of half-balls (each the
intersection of a ball and a halfspace) is $O(d)$. The following statements apply
to either class.

The bound $f(B) - f_n(B) \leq \beta_n \sqrt{f(B)}$ from Theorem~\ref{thm:vc} yields
$f(B) > \beta_n^2  \implies f_n(B) > 0.$
For the second bound, we use $f(B) - f_n(B) \leq \beta_n^2 + \beta_n \sqrt{f_n(B)}$.
It follows that
\begin{eqnarray*}
f(B) \geq \frac{k}{n} + \beta_n^2 + \beta_n \sqrt{\frac{k}{n}} & \implies & f_n(B) \geq \frac{k}{n} .
\end{eqnarray*}
For the last bound, we rearrange $f(B) - f_n(B) \geq - (\beta_n^2 + \beta_n \sqrt{f(B)})$
to get
\begin{eqnarray*}
f(B) < \frac{k}{n} - \beta_n^2 - \beta_n \sqrt{\frac{k}{n}} & \implies & f_n(B) < \frac{k}{n} .
\end{eqnarray*}
\end{proof}

Lemma~\ref{lemma:convergence-balls} now follows immediately, by taking
$k \geq d \log n$. Since the uniform convergence bounds have error bars of
magnitude $(d \log n)/n$, it doesn't make sense, when using them,
to take $k$ any smaller than this.

\subsection{Proof of Lemma~\ref{lemma:connectedness}}

Consider any $x, x' \in A \cap X_n$. Since $A$ is connected, there is a path $P$ in
$A$ with $x \stackrel{P}{\leadsto} x'$. Fix any $0 < \gamma < 1$. Because the density
of $A_\sigma$ is lower bounded away from zero, it follows by a volume and packing-covering
argument that $A$, and thus $P$, can be covered by a finite number of balls of
diameter $\gamma r$. Thus we can choose finitely many points
$z_1, z_2, \ldots, z_k \in P$ such that $x = z_0$, $x' = z_k$ and
$ \| z_{i+1} - z_i \| \leq \gamma r .$

Under $E_o$ (Lemma~\ref{lemma:convergence-balls}), any ball centered in $A$ with radius
$(\alpha-\gamma)r/2$ contains at least one data point if
\begin{equation}
v_d \left(\frac{(\alpha-\gamma)r}{2} \right)^d \lambda
\ \geq \  \frac{C_\delta d \log n}{n} . \label{eq:r}
\end{equation}
Assume for the moment that this holds. Then, every ball $B(z_i, (\alpha-\gamma)r/2)$
contains at least one point; call it $x_i$.

By the upper bound on $r$, each such $x_i$ lies in $A_{\sigma-r}$; therefore,
by Lemma~\ref{lemma:convergence-balls}, the $x_i$ are all active in $G_r$. Moreover,
consecutive points $x_i$ are close together:
$$ \|x_{i+1} - x_i \|
\ \leq \
\|x_{i+1} - z_{i+1}\| + \|z_{i+1} -z_i\| + \|z_i - x_i \|
\ \leq \
\alpha r .$$
Thus all edges $(x_i, x_{i+1})$ exist in $G_r$, whereby $x$ is connected to
$x'$ in $G_r$.

All this assumes that equation (\ref{eq:r}) holds for some $\gamma > 0$.
Taking $\gamma \rightarrow 0$ gives the lemma.

\section{Fano's inequality}
\label{sec:fano}

Consider the following game played with a predefined, finite class of distributions
$F = \{f_1, \ldots, f_\ell\}$, defined on a common space $\X$:
\begin{itemize}
\item Nature picks $I \in \{1,2,\ldots, \ell\}$.
\item Player is given $n$ i.i.d.\ samples $X_1, \ldots, X_n$ from $f_i$.
\item Player then guesses the identity of $I$.
\end{itemize}
Fano's inequality \citep{CT90, Y97} gives a lower bound on the number of samples $n$
needed to achieve a certain success probability. It depends on how similar the
distributions $f_i$ are: the more similar, the more samples are needed.
Define
$ \theta = \frac{1}{\ell^2} \sum_{i,j = 1}^\ell K(f_i, f_j) $
where $K(\cdot)$ is KL divergence. Then $n$ needs to be $\Omega((\log \ell)/\theta)$.
Here's the formal statement.
\begin{thm}[Fano]
Let $g: \X^n \rightarrow \{1,2,\ldots,\ell\}$ denote Player's computation.
If Nature chooses $I$ uniformly at random from $\{1,2,\ldots, \ell\}$, then for any
$0 < \delta < 1$,
\begin{eqnarray*}
n \leq \frac{(1-\delta) (\log_2 \ell) - 1}{\theta} & \implies & \mbox{\rm Pr}(g(X_1, \ldots, X_n) \neq I) \geq \delta.
\end{eqnarray*}
\end{thm}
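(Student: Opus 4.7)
The plan is to invoke the standard information-theoretic machinery: treat $I \to (X_1,\dots,X_n) \to \hat I := g(X_1,\dots,X_n)$ as a Markov chain, and lower-bound $\Pr(\hat I \neq I)$ by combining (i) an entropy inequality relating $H(I\mid \hat I)$ to the error probability, and (ii) a mutual-information upper bound on $I(I;X_1,\dots,X_n)$ in terms of $\theta$. Because $I$ is uniform on $\{1,\ldots,\ell\}$, we have $H(I) = \log_2 \ell$, so any upper bound on $I(I;\hat I)$ translates directly to a lower bound on $H(I\mid \hat I)$ and hence on $\Pr(\hat I \neq I)$.

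First I would prove the ``Fano entropy inequality'': letting $E = \mathbf 1\{\hat I \neq I\}$ and $P_e = \Pr(E = 1)$, expand $H(E,I\mid \hat I)$ two ways using the chain rule. One expansion gives $H(I\mid \hat I) + H(E\mid I,\hat I) = H(I\mid \hat I)$, since $E$ is a deterministic function of $(I,\hat I)$. The other gives $H(E\mid \hat I) + H(I\mid E,\hat I) \leq H(E) + H(I\mid E,\hat I) \leq 1 + P_e \log_2(\ell-1) \leq 1 + P_e \log_2 \ell$, where the last step bounds $H(I\mid E=1,\hat I)$ by the log of the support size $\ell-1$ and uses $H(I\mid E=0,\hat I) = 0$. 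This yields $H(I\mid \hat I) \leq 1 + P_e \log_2 \ell$.

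Second, I would use the data-processing inequality, $I(I;\hat I) \leq I(I;X_1,\ldots,X_n)$, together with the identity $H(I\mid \hat I) = H(I) - I(I;\hat I) = \log_2 \ell - I(I;\hat I)$, to get $\log_2 \ell - I(I;X_1,\ldots,X_n) \leq 1 + P_e \log_2 \ell$. Then I would bound the mutual information: writing $\bar f = \frac{1}{\ell}\sum_i f_i$, we have $I(I;X_1) = \frac{1}{\ell}\sum_i \KL(f_i \,\|\, \bar f)$, and by convexity of $\KL(f_i \,\|\,\cdot)$ (Jensen applied to the average inside), $\KL(f_i \,\|\,\bar f) \leq \frac{1}{\ell}\sum_j \KL(f_i \,\|\, f_j)$. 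Summing gives $I(I;X_1) \leq \frac{1}{\ell^2}\sum_{i,j}\KL(f_i\,\|\,f_j) = \theta$. Tensorization for i.i.d.\ samples ($\KL(f_i^{\otimes n}\,\|\,f_j^{\otimes n}) = n\,\KL(f_i\,\|\,f_j)$) then yields $I(I;X_1,\ldots,X_n) \leq n\theta$.

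Combining the two bounds gives $\log_2 \ell - n\theta \leq 1 + P_e \log_2 \ell$, i.e.\ $P_e \geq 1 - \frac{n\theta + 1}{\log_2 \ell}$. Solving for when $P_e \geq \delta$ yields exactly $n \leq \frac{(1-\delta)\log_2 \ell - 1}{\theta}$, as claimed. The only real subtlety is the convexity step giving $I(I;X_1) \leq \theta$; the rest is bookkeeping with entropies. (A minor point of care is the logarithm base: the KL divergences in $\theta$ must be taken in the same base as the $\log_2 \ell$ appearing in the statement, which is the convention implicit in the preceding discussion.)
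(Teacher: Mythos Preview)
The paper does not actually prove this theorem; it is stated as a classical result with citations to Cover--Thomas and Yu, and is used as a black box in the lower-bound argument. Your derivation is the standard textbook proof and is correct.

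One expository remark: the passage from $I(I;X_1)\leq\theta$ to $I(I;X_1,\ldots,X_n)\leq n\theta$ is not a direct consequence of the single-sample bound plus ``tensorization''; the marginal of $(X_1,\ldots,X_n)$ is the mixture $\frac{1}{\ell}\sum_j f_j^{\otimes n}$, not $\bar f^{\,\otimes n}$. What you should say (and presumably intend) is that the same convexity step applies at the $n$-sample level:
\[
I(I;X_1,\ldots,X_n)=\frac{1}{\ell}\sum_i \KL\!\Big(f_i^{\otimes n}\,\Big\|\,\tfrac{1}{\ell}\sum_j f_j^{\otimes n}\Big)
\ \leq\ \frac{1}{\ell^2}\sum_{i,j}\KL\!\big(f_i^{\otimes n}\,\big\|\,f_j^{\otimes n}\big)
\ =\ \frac{n}{\ell^2}\sum_{i,j}\KL(f_i\,\|\,f_j)=n\theta,
\]
the last equality being the chain rule for KL on product measures. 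With that made explicit, the argument is complete, and your caveat about matching the logarithm base in $\theta$ and $\log_2\ell$ is well taken.
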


\subsection*{Acknowledgements}

Dasgupta is grateful to the National Science Foundation for support under grant IIS-0347646, and von Luxburg acknowledges funding of the German Research Foundation
(individual grant LU1718/1-1 and Research Unit 1735 "Structural Inference in Statistics: Adaptation and Efficiency'').

\bibliography{sanjoy}

\end{document}